\theoremstyle{plain}  
\newtheorem{theorem}{Theorem}[section]
\newtheorem{lemma}[theorem]{Lemma}
\newtheorem{proposition}[theorem]{Proposition}
\theoremstyle{definition}  
\newtheorem{definition}[theorem]{Definition}
\newtheorem{assumption}[theorem]{Assumption}  
\newtheorem{remark}{Remark}
\title[Learning with $\phi$- and $\beta$-mixing sequence]{Online Regularized Learning Algorithms in RKHS with $\beta$- and $\phi$-Mixing Sequences
}
\author{Priyanka Roy}
\address{Institute for Mathematical Methods in Medicine and Data-Based Modeling\\
         Johannes Kepler University Linz, Altenberger Strasse 69, A-4020 Linz, Austria}
\email{priyanka.roy@jku.at}
\author{Susanne Saminger-Platz}
\address{Institute for Mathematical Methods in Medicine and Data-Based Modeling\\
         Johannes Kepler University Linz, Altenberger Strasse 69, A-4020 Linz, Austria}
\email{susanne.saminger-platz@jku.at}
\keywords{Markov chain, Mixing coefficients, Copula,  Approximation,  Reproducing kernel Hilbert spaces}
\subjclass[2020]{60J20, 68T05, 68Q32, 62L20, 62H05}
\date{\today}
\def\tsc#1{\csdef{#1}{\textsc{\lowercase{#1}}\xspace}}
\newcommand{\opT}[1]{{T}_{K,#1}}
\newcommand{\opI}{{I}}
\begin{document}

\begin{abstract}
In this paper, we study an online regularized learning algorithm in a reproducing kernel Hilbert spaces (RKHS) based on a class of dependent processes. We choose such a process where the degree of dependence is measured by mixing coefficients. As a representative example, we analyze a strictly stationary Markov chain, where the dependence structure is characterized by the \(\phi\)- and \(\beta\)-mixing coefficients. Under these assumptions, we derive probabilistic upper bounds as well as convergence rates for both the exponential and polynomial decay of the mixing coefficients.
\end{abstract}
\maketitle

\section{Introduction}
Consider an iterative algortihm defined by 
\[
f_{t+1} = f_t - \gamma_t \left( (f_t(x_t) - y_t) K_{x_t} + \lambda f_t \right),  \quad \text{with an initial choice } f_1 \in \mathcal{H}_K, 
\]
where $\mathcal{H}_K$ is the reproducing kernel Hilbert space and $\lambda > 0$ is called the \textit{regularization parameter}. We call the sequence $(\gamma_t)_{t \in \mathbb{N}} $ the \textit{step sizes} or \textit{learning rates} and $ (f_t)_{t \in \mathbb{N}}$ the \textit{learning sequence}.

We further consider that the sequence \( (z_t=(x_t,y_t))_{t \in \mathbb{N}} \) is modeled as a strictly stationary Markov chain on an uncountable state space \((Z, \mathcal{B}(Z))\), characterized by a transition kernel \( P \) and a unique stationary distribution \( \rho \), where the decay of dependence in the chain is characterized by the mixing coefficients specifically \(\phi\)- and \(\beta\)-mixing. Based on the Markov samples \( (z_t)_{t \in \mathbb{N}} \), our primary goal is to investigate the convergence behavior of the sequence \( (f_t)_{t \in \mathbb{N}} \) towards the unique minimizer \( f_{\lambda,\mu} \in \mathcal{H}_K \) which takes into account the mixing coefficients of the Markov chain, specifically the \(\phi\)- and \(\beta\)-mixing coefficients. Note that, 
\[
f_{\lambda, \mu}=\underset{f \in \mathcal{H}_K} {\text{arg min}}~\left\{\int_{X} (f(x)-f_{\rho}(x))^2d\mu+\lambda \|f\|_{K}^2\right\},
\] where for probability measure \(\rho\) on \( Z = X \times Y \) and marginal distribution \(\mu\) on \(X\), the regression function \(f_{\rho}\) is given as 
\[
f_{\rho}(x) = \int_Y y \, d\rho(y \mid x),
\]
where \(\rho(y\mid x)\) denotes the conditional distribution of \(y\) given \(x\). It has been be shown that the regression function \( f_{\rho} \) minimizes the expected mean squared error defined by
\[
\mathcal{E}(f) = \int_{X \times Y} (f(x) - y)^2 \, d\rho(z).
\] For more details see Cucker and Zhou~\cite{MR2354721}. Hence, our main goal is to study the probabilistic distance of the current hypothesis \(f_t\) to \(f_{\lambda, \mu}\) and eventually to the regression function \(f_{\rho}\) based on a strictly stationary Markov chain \((z_t)_{t \in \mathbb{N}}\).
\par
For both cases of \(\phi\)- and \(\beta\)-mixing coefficients, a summary of our findings is that for the case of exponential decay of both the mixing coefficients and  for model parameters \(\theta\) and \(\alpha\),  when 
\( \theta \in \left( \tfrac{1}{2}, 1 \right) \), the convergence rate remains 
\( \mathcal{O}\!\left( t^{-\theta/2} \right) \), aligning with the i.i.d.\ rates established by Smale and Yao~\cite{MR2228737}. 
However, at the boundary case \( \theta = 1 \), the rate deteriorates to 
\( \mathcal{O}\!\left( t^{-\alpha/2} \right) \), in contrast to the i.i.d.\ setting where a faster rate 
\( \mathcal{O}\!\left( t^{-\alpha} \right) \) is achieved, as shown by Smale and Yao~\cite{MR2228737}, where \(\alpha= \frac{\lambda}{(\lambda + C_K^2)} \in (0,1]\). 
Moreover, our results are sharp in the sense that when the mixing rate is fast, the dependence becomes negligible, making the error bounds and learning rates nearly identical to those observed in the independent sample setting.\par We further extend the above analysis, for example when $\theta = 1$ and $\lambda < C_K^2$, the rate of convergence is 
\(
 \mathcal{O}\!\left(\lambda^{-1}\,t^{-\frac{\lambda}{2(\lambda + C_K^2)}}\right),
\)
where the convergence exponent $\alpha(\lambda) = \frac{\lambda}{2(\lambda + C_K^2)}$ increases monotonically with $\lambda$, reaching its maximum of $1/2$ as $\lambda \to C_K^2$. Consequently, when $\lambda$ is chosen close to $C_K^2$, the error decays nearly as fast as $t^{-1/2}$, with only a prefactor of order $1/C_K^2$. In contrast, smaller values of $\lambda$ both reduce $\alpha(\lambda)$, slowing the polynomial decay in $t$, and increase the constant factor $\lambda^{-1}$. Thus,  setting $\lambda$ near $C_K^2$ yields a steeper decay in \(t\) at the cost of a moderate constant, whereas taking $\lambda$ very small produces both a larger leading constant and a slower decay in \(t\) comparitively.\par
As for the polynomial decay of both the mixing coefficients, for example, when \(\phi_i\leq bi^{-k}\), where \(b>0\) and \(k>0\), for parameter values 
\( \theta \in \left( \tfrac{1}{2}, 1 \right) \), the convergence rate remains the same as that of the i.i.d. rate for the value \(k>1\), however for \(k=1\), the rate remains almost the same as that of the i.i.d. rate except for a logarithmic factor, i.e., \(\mathcal{O}\!\left(t^{-\theta/2} (\log t)^{1/2} \right)\). 
\par
The version with the independence assumption of this algorithm in reproducing kernel Hilbert space was studied by Smale and Yao~\cite{MR2228737}. 
In many scenarios e.g., time series, Markov chains, stochastic processes the i.i.d.\ assumption is often violated due to the exhibited temporal dependence, see e.g., \cite{mokkadem1988mixing, tuan1985some, beeram2021survey}. Despite this, learning algorithms have been successfully applied in such dependent settings, thereby motivating the need for a more theoretical foundation to understand their performance under these conditions. Various measures of statistical dependence are relevant in this context, for e.g., mixing coefficients (e.g., \(\alpha\)-, \(\beta\)-, and \(\phi\)-mixing), as well as spectral properties such as the spectral gap and mixing time of Markov chains. 
\par

There is growing interest in studying learning algorithms with dependent samples, particularly how different mixing coefficients affect e.g., the stability, optimality of various learning algorithms. A key early contribution by Meir~\cite{meir2000nonparametric} extended the VC framework to \(\beta\)-mixing time series, deriving finite-sample risk bounds, and introducing the use of structural risk minimization for dependent settings. Modha and Masry~\cite{modha2002minimum} extended minimum-complexity regression to dependent samples, retaining i.i.d.~convergence rates for \(m\)-dependent series but slowing under strong mixing due to reduced effective sample size. Subsequent works have characterized the impact of dependence on learning guarantees; for example, Zou and Li~\cite{zou2007performance} established bounds for ERM for exponentially strongly mixing sequences; Mohri and Rostamizadeh~\cite{mohri2008rademacher, mohri2010stability} derived Rademacher-complexity and stability bounds for \(\beta\)- and \(\phi\)-mixing sequences; and Ralaivola~\cite{ralaivola2010chromatic} developed PAC-Bayes bounds via dependency-graph partitioning. Regularized regression methods have likewise advanced to accommodate a range of mixing conditions; for example, Xu and Chen~\cite{MR2406432} showed that for exponentially \(\alpha\)-mixing samples, Tikhonov regularization’s error depends on an “effective” sample size. Sun and Wu~\cite{MR2581234} established capacity-independent least-squares bounds under both 
\(\alpha\)- and \(\phi\)-mixing, achieving up to a log factor the same rates as in the i.i.d.\ setting. Steinwart \emph{et al.}~\cite{steinwart2009learning}, Steinwart and Christmann~\cite{steinwart2009fast}, and Xu \emph{et al.}~\cite{xu2014generalization} analyzed SVM consistency and convergence under various mixing regimes, quantifying the effect of mixing rates on optimality. More recently, Tong and Ng~\cite{tong2024spectral} recovered near-i.i.d.\ rates for strongly mixing sequences based on spectral learning algorithms.
\par
In contrast to previous analyses that predominantly focused on empirical risk minimization, various regularized schemes, etc., under various mixing conditions, our work specifically investigates the performance of online regularized learning algorithm under strictly stationary Markov chain assumptions, where the dependence in the chain is characterized by either $\phi$- or $\beta$-mixing coefficients. We explicitly analyze the learning capabilities of this algorithm by deriving precise convergence rates. In some of the recent works, there have been several investigations to study problem of online learning either reguarized or unregulaized for non i.i.d. samples, for example, Smale and Zhou~\cite{smale2009online} examined an online regularized learning algorithm in RKHS where each training sample is generated by a Markov chain. Exploiting the chain’s exponentially fast mixing i.e., the marginal distribution drifts toward stationarity at an exponential rate, they decompose the sampling process and obtain almost-sure convergence of the algorithm’s iterates to the true regression function. Furthermore, Duchi \emph{et al.}~\cite{agarwal2012generalization} gave generalization bounds for asymptotically stationary mixing processes to the cases of $\beta$- and $\phi$-mixing in the case of stable online learning algorithms. Their method relies on martingale concentration inequalities. Recently, Zhang and Li~\cite{zhang2023online} analyzed online regularized learning algortihm in RKHS for nonparametric regression without the i.i.d. assumption on the samples. They proved that the estimator converges in mean square under two broad non‐i.i.d. settings, i.e., for weakly dependent, non-stationary streams whose instantaneous covariance operators remain persistently excited and for independent samples drawn from drifting probability measures. 
\par
Hence, this work aims to address an open question raised in Ying and Zhou~\cite{MR2302601}, focusing on online regularized learning algorithms in the context of regression problems where the samples exhibit a Markov chain structure.
\par
To outline our paper, we begin in Section~\ref{prelim} with a brief overview of the $\beta$- and $\phi$-mixing coefficients, which characterize the dependence structure in stationary processes. We then provide an overview of the classical framework of online regularized learning algorithms in reproducing kernel Hilbert spaces (RKHS). Section~\ref{sec2} presents our main result, Theorem~\ref{thm:2}, where we derive an upper bound on the initial error and a high-probability bound on the sample error for an online regularized learning algorithm based on a strictly stationary, exponentially $\phi$-mixing Markov chain. To prove Theorem~\ref{thm:2} in the RKHS setting, we first recall a more general result in Section~\ref{section3}. This section recalls a strictly stationary  Markov chain gradient descent algorithm in Hilbert spaces and adapts it to the case of exponentially $\phi$-mixing chains, resulting in Theorem~\ref{thm:1} based on which we finally prove Theorem \ref{thm:2} in Section \ref{secondproof}. Related results for $\beta$-mixing sequences are also discussed in the following section, i.e., in Section \ref{beta1}.
Finally, in Section \ref{eg}, we illustrate an example of computing the relevant dependence coefficient by considering a copula-based Markov chain.
\section{The measures of dependence (mixing coefficients)}\label{prelim}
In this section, we recall some relevant definitions, properties, and established results concerning some of the mixing coefficients (see for e.g.,  \cite{MR2178042, MR2325294, MR2325295, MR2325296}).
\par
Denote $(\Omega, \mathcal{F}, P)$ as a probability space. For any $\sigma$-field $\mathcal{A} \subseteq \mathcal{F}$, let $L^2_{\text{real}}(\mathcal{A})$ denote the space of (equivalence classes of) square-integrable $\mathcal{A}$-measurable (real-valued) random variables. For any two $\sigma$-fields $\mathcal{A}$ and $\mathcal{B} \subseteq \mathcal{F}$, we will focus on the following two measures of dependence
\begin{align*}  
    \phi(\mathcal{A}, \mathcal{B}) &:= \sup \left| P(B \mid A) - P(B) \right|, \quad \forall  A \in \mathcal{A} \text{ with }P(A) > 0, \; \forall B \in \mathcal{B}, \;  \\
    \beta(\mathcal{A}, \mathcal{B}) &:= \sup \frac{1}{2} \sum_{i=1}^{I} \sum_{j=1}^{J} \left| P(A_i \cap B_j) - P(A_i)P(B_j) \right|, \quad \forall A \in \mathcal{A}, \; \forall B \in \mathcal{B},   
\end{align*}
where the supremum is taken over all pairs of (finite) partitions \(\{A_1, \ldots, A_I\}\) and \(\{B_1, \ldots, B_J\}\) of \(\Omega\) such that \(A_i \in \mathcal{A}\) for each \(i \in I\) and \(B_j \in \mathcal{B}\) for each \(j\in J\).\\
For independent \(\mathcal{A}\) and \(\mathcal{B}\) we obtain \begin{align*}
     \phi(\mathcal{A}, \mathcal{B}) = 0,  \quad \beta(\mathcal{A}, \mathcal{B}) = 0.
\end{align*}
Moreover, these measures of dependence satisfy the following inequalities
\begin{align}
    0\leq \beta(\mathcal{A}, \mathcal{B}) \leq \phi(\mathcal{A}, \mathcal{B})\leq 1\label{r1}
\end{align}
We now recall the definition of some mixing coefficients of a strictly stationary Markov chain.
\begin{definition}
    Let $(Z_t)_{t\in\mathbb{N}}$ be a strictly stationary Markov chain; then
$\beta_t = \beta(\sigma(Z_0), \sigma(Z_t))$ and $\phi_t = \phi(\sigma(Z_0), \sigma(Z_t))$. The sequence $(Z_t)_{t\in\mathbb{N}}$ is called \textbf{$\phi$-mixing} if $\phi_t \to 0$ and \textbf{absolutely regular (or $\beta$-mixing)} if $\beta_t \to 0$ for $t\to\infty$.

Using the conditional probabilities \( P^t(z, B) = P(z_t \in B \mid z_0 = z) \) and the invariant distribution \(\rho \) (i.e., we take the starting distribution \(\mu_0=\rho\) since we consider a stationary Markov chain), the mixing coefficients can equivalently be expressed as follows (also see \cite[Theorem 3.32]{MR2325294}, \cite{MR1312160}, \cite{MR2944418})

\begin{equation}
  \beta_t = \int_{Z} \sup_{B \in \mathcal{B}(Z)} \left| P^t(z, B) - \rho(B) \right|\,d\rho,  
\end{equation}

and
\begin{equation}
   \phi_t = \sup_{B \in \mathcal{B}(Z)} \, \text{ess} \sup_{z \in Z} \left| P^t(z, B) - \rho(B) \right|. 
\end{equation}
\end{definition}

\subsection{Example}
Stochastic processes that satisfy mixing conditions include, for instance, ARMA processes or copula-based Markov chains: Mokkaem~\cite{mokkadem1988mixing} demonstrates that, under mild assumptions specifically, the absolute continuity of innovations, every stationary vector ARMA process is geometrically completely regular. Similarly, Longla and Peligrad~\cite{MR2944418} show that for copula-based Markov chains, certain properties such as a positive lower bound on the copula density ensure exponential \(\phi\)-mixing, which in turn implies geometric ergodicity. Such copula-based Markov chains have found applications in time series econometrics and other applied fields. In Section \ref{eg}, we shall further clarify how the mixing coefficients can be estimated for a copula based Markov chain and further desrcibe the problem of learning for such samples. For additional examples of mixing processes, see Davydov~\cite{davydov1973mixing}.
\section{Overview of an online learning algorithm in RKHS}
For a compact metric space \( X\) and a target space \( Y \subseteq \mathbb{R} \), the classical problem of learning from independent and identically distributed (i.i.d.) samples \(( z_i = (x_i, y_i) )_{i=1}^{t} \) drawn according to a probability measure \(\rho\) on \( Z = X \times Y \) aims to approximate the regression function
\[
f_{\rho}(x) = \int_Y y \, d\rho(y \mid x),
\]
where \(\rho(y \mid x)\) denotes the conditional distribution of \(y\) given \(x\). It can be shown (see Cucker and Zhou~\cite{MR2354721}) that the regression function \( f_{\rho} \) minimizes the expected mean squared error defined by
\[
\mathcal{E}(f) = \int_{X \times Y} (f(x) - y)^2 \, d\rho(z).
\]
Let \( K : X \times X \to \mathbb{R} \) be a Mercer kernel, that is, a continuous, symmetric and positive semi-definite function. The RKHS \( \mathcal{H}_K\) associated with \( K\) is defined to be the completion of the linear span of the set of functions \(\{K(\cdot, x) : x \in X\}\) with inner product satisfying, for any \( x \in X \) and \( g \in \mathcal{H}_K \), the reproducing property

\begin{equation}
\langle K_x, g \rangle = g(x).
\label{eq:reproducing_property}
\end{equation}

Let \( (z_t = (x_t, y_t))_{t \in \mathbb{N}} \) be a sequence of random samples independently distributed according to \( \rho \). The \textit{online gradient descent algorithm} is defined as
\begin{equation}
\begin{aligned}
    f_{t+1} &= f_t - \gamma_t\left((f_t(x_t) - y_t) K_{x_t} + \lambda f_t\right)~\text{for some}~f_1\in \mathcal{H}_K~\text{e.g.,}~f_1=0,
\end{aligned}
\label{eq:ogd}
\end{equation}
where \( \lambda \geq 0 \) is called the \textit{regularization parameter}. We call the sequence \( (\gamma_t)_{t \in \mathbb{N}} \) the \textit{step sizes or learning rate} and \( (f_t)_{t \in \mathbb{N}}\) the \textit{learning sequence}. Clearly, each output \( f_{t+1} \) depends on \((z_i)_{i=1}^t\).
\par
Note that by considering a potential loss function given as \( V: W \rightarrow \mathbb{R} \) defined as
\[
V(w) = \frac{1}{2} \langle Aw, w \rangle + \langle B, w \rangle + C,
\]
where \( A: W \rightarrow W \) is a positive definite bounded linear operator with bounded inverse, i.e., \( \|A^{-1}\| < \infty \), \( B \in W \), \( C \in \mathbb{R} \) and \(W\) a Hilbert space, we can derive Eq.~\eqref{eq:ogd} by using the general update formula for gradient descent in a Hilbert space \( W \) given as 
\begin{equation}\label{genupdt}
   w_{t+1} = w_t - \gamma_t \nabla V_{z_t}(w_t). 
\end{equation}
For this, we consider a specific choice of a Hilbert space i.e., $W=\mathcal{H}_K$ and for a fixed $z=(x,y)\in Z$, the quadratic potential map $V:\mathcal{H}_K \rightarrow \mathbb{R}$ defined as 
\begin{equation}\label{lossinhk}
V_{z}(f) = \frac{1}{2} \left( (f(x) - y)^2 + \lambda \|f\|_K^2 \right).
\end{equation}
Hence due to \cite[Proposition~3.1]{MR2228737}, we observe that \(\nabla V_z(f)=(f(x)-y)K_x+\lambda f\) and taking $f=f_t$ and $(x,y)=(x_t,y_t)$, we now obtain that \(\nabla V_{z_t}(f_t)=(f_t(x_t)-y_t)K_{x_t}+\lambda f_t)\). Finally, for a particular choice of a Hilbert space i.e., $W=\mathcal{H}_K$, we identify $w_t=f_t$, $\nabla V_{z_t}(w_t)=\nabla V_{z_t}(f_t)$ in Eq.~\eqref{genupdt} to finally obtain Eq.~\eqref{eq:ogd} i.e., the online learning algortihm in RKHS from the general update formula Eq.~\eqref{genupdt}.
\par
We additionally note that in general the gradient of \(V\) i.e., \(\nabla V: W \rightarrow W\) is given by 
$$\nabla V(w)=Aw+B.$$
Note that for each single sample $z=(x,y)$,
$$\nabla V_z(w)=A(z)w+B(z),$$
where $A(z)$ is a random variable depending on $z$, given by the map $A:Z\rightarrow SL(W)$ taking values in $SL(W)$, the vector space of symmetric bounded linear operators on $W$ and $B:Z\rightarrow W$, where $B(z)$ is a $W$ valued random variable depending on $z$. We shall later clarify the exact expression of \(A:W\rightarrow W\) and \(B\in W\) for the choice of the loss function given as in Eq.~\eqref{lossinhk} in RKHS i.e., when we choose \(W=\mathcal{H}_K\).
\par
Note that in the above context as well as in the rest of the paper, we adopt a slight abuse of notation; i.e., we use $\nabla V_z$ to emphasize dependence on a sample $z \in Z$, and $\nabla V_t$ to emphasize dependence on the time step $t$, referring to $z_t$ when contextually appropriate.
\par
In case \( \lambda > 0 \), we call Eq.~\eqref{eq:ogd} the \textit{online regularized learning algorithm} which has been considered in this paper. In the i.i.d.~setting, this algorithm has been extensively studied in recent literature (see, e.g., \cite{MR2228737, MR2302601}).
\par
We now generalize the online regularized learning algorithm in Eq.~\eqref{eq:ogd} to accommodate dependent observations. Specifically, we consider a strictly stationary Markov chain, where the degree of dependence is quantified by \(\phi\)- and \(\beta\)-mixing coefficients.
\section{An online regularized learning algorithm in RKHS for a strictly stationary exponentially \(\phi\)-mixing Markov chain}\label{sec2}
Let \( X\) be a compact and measurable space, and let \( M > 0 \). Define \( Z = X \times [-M, M] \), which we consider as a subset of \( \mathbb{R}^n \times \mathbb{R} \) equipped with the Borel \(\sigma\)-algebra \(\mathcal{B}(Z)\). Suppose \( (z_t)_{t\in\mathbb{N}} \) is a strictly stationary Markov chain taking values in the state space \( (Z, \mathcal{B}(Z)) \) and is exponentially \(\phi\)-mixing. Let \(\mu\) denote the stationary marginal probability measure on \( X \), and let \(\rho\) denote the stationary joint probability measure on \( Z=X \times Y \).
\begin{definition}
     For a  probability measure $\mu$ we define an integral operator $\opT{\mu}:L^2(\mu)\rightarrow L^2(\mu)$ as 
\begin{equation}\label{eq:Tkmu}
\opT{\mu}f(\cdot)=\int_X K(\cdot,x)f(x)d\mu(x),~~f\in  L^2(\mu),
\end{equation}
where $\opT{\mu}$ is a well-defined continuous and compact operator with $L^2(\mu)$ being the Hilbert space of square-integrable functions with respect to $\mu$. The \textit{regression function} $f_{\rho}$ is said to satisfy the \textbf{source condition} (of order $\nu$) if
$$f_\rho=\opT{\mu}^\nu(g)~\text{for some}~g\in L^{2}(\mu).$$ 
\end{definition}
Under suitable conditions on $K$, the operator $T_{K,\mu}$ is well-defined, continuous, and compact. The compactness of $T_{K,\mu}$ ensures that it has a discrete spectrum with eigenvalues accumulating only at zero.
The source condition $f_{\rho} = T_\mu^\nu g$ essentially quantifies the smoothness of the regression function $f_{\rho}$. Here, $\nu$ indicates the degree of smoothness i.e., higher values of $\nu$ correspond to smoother functions. This condition implies that $f_{\rho}$ lies in the range of $T_\mu^\nu$, which is a subset of $L^2(\mu)$ characterized by functions that are smoother in a certain spectral sense determined by $T_{K,\mu}$. We shall denote \(\|\cdot\|_{\rho}=\|\cdot\|_{L^2({\mu})}\). 
\par
As outlined in the introduction and the beginning of this section, we consider an online regularized algorithm described by Eq.~\eqref{eq:ogd} that operates along a trajectory of a strictly stationary Markov chain whose degree of dependence is characterized by either \(\phi\)- or \(\beta\)-mixing coefficients. The main purpose of this paper is to obtain probabilistic upper bounds for the quantity $\|f_{t+1}-f_{\rho}\|_{\rho}$, based on observations from such a trajectory taking into account the mixing coefficients of the chain.
\par
Note that $f_{t+1}-f_{\rho}$ can be decomposed into the following parts
\begin{align}\label{errdec}
    f_{t+1}-f_{\rho}=\left(f_{t+1}-f_{\lambda, \mu}\right)+\left(f_{\lambda, \mu}-f_{\rho}\right),
\end{align}
where 
\begin{equation}\label{eq:flamstar}
f_{\lambda, \mu}=\underset{f \in \mathcal{H}_K} {\text{arg min}}~\left\{\int_X (f(x)-f_{\rho}(x))^2d\mu+\lambda \|f\|_{K}^2\right\},
\end{equation}
with $\lambda>0$ and $\|f\|_{K}=\|(\opT{\mu}^{1\slash 2})^{-1}f\|_{\rho}$ (see also Smale and Zhou~\cite{MR1959283}). 
Recalling \cite[Proposition~7]{MR1864085}, we obtain 
$$f_{\lambda, \mu}=(\opT{\mu}+\lambda \opI)^{-1}\opT{\mu}f_{\rho},$$
as a unique minimizer of Eq.~\eqref{eq:flamstar}.
 \subsection{Upper bounding the distance between $f_{\lambda, \mu}~\text{and}~f_{\rho}$ }
 In order to estimate the term $\|f_{\lambda, \mu}-f_{\rho}\|_{\rho}$, we recall Lemma 3 in \cite{MR2327597} i.e.,
\begin{lemma}[\cite{MR2327597}]
    Let $\lambda>0$ and $f_{\lambda, \mu}$ be defined by Eq.~\eqref{eq:flamstar}. If $f_\rho$ satisfies the source condition (of order $\nu$) with $0<\nu\leq 1$, then for some $g\in L^2(\mu)$ and for any $\lambda >0$, the following holds
    $$\|f_{\lambda, \mu}-f_\rho\|_{\rho}\leq \lambda^\nu \|g\|_{\rho}. $$
\end{lemma}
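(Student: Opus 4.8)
The plan is to start from the closed-form representation of the minimizer recalled just above, namely $f_{\lambda,\mu} = (\opT{\mu} + \lambda\opI)^{-1}\opT{\mu}f_\rho$, and to rewrite the error as a single operator applied to $f_\rho$. Writing
$$f_{\lambda,\mu} - f_\rho = \left[(\opT{\mu}+\lambda\opI)^{-1}\opT{\mu} - \opI\right]f_\rho$$
and using the algebraic identity $(\opT{\mu}+\lambda\opI)^{-1}\opT{\mu} - \opI = -\lambda(\opT{\mu}+\lambda\opI)^{-1}$, I would obtain the compact expression $f_{\lambda,\mu}-f_\rho = -\lambda(\opT{\mu}+\lambda\opI)^{-1}f_\rho$. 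Substituting the source condition $f_\rho = \opT{\mu}^\nu g$ then gives $f_{\lambda,\mu}-f_\rho = -\lambda(\opT{\mu}+\lambda\opI)^{-1}\opT{\mu}^\nu g$.

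Next, passing to norms and using submultiplicativity,
$$\|f_{\lambda,\mu}-f_\rho\|_\rho \le \lambda\,\bigl\|(\opT{\mu}+\lambda\opI)^{-1}\opT{\mu}^\nu\bigr\|\,\|g\|_\rho,$$
so the whole problem reduces to controlling the operator norm of $(\opT{\mu}+\lambda\opI)^{-1}\opT{\mu}^\nu$. Here I would invoke that $\opT{\mu}$ is a compact, self-adjoint, positive operator on $L^2(\mu)$, so by the spectral theorem the operators $\opT{\mu}^\nu$ and $(\opT{\mu}+\lambda\opI)^{-1}$ are simultaneously diagonal in the same eigenbasis, and the operator norm of their product equals the supremum of the corresponding scalar symbol over the spectrum:
$$\bigl\|(\opT{\mu}+\lambda\opI)^{-1}\opT{\mu}^\nu\bigr\| = \sup_{\sigma \ge 0}\frac{\sigma^\nu}{\sigma+\lambda}.$$

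The remaining step, and the only genuine estimate, is the elementary scalar bound $\dfrac{\sigma^\nu}{\sigma+\lambda} \le \lambda^{\nu-1}$, valid for all $\sigma\ge 0$ when $0<\nu\le 1$. I would prove it by the substitution $t=\sigma/\lambda$, which reduces it to $\dfrac{t^\nu}{t+1}\le 1$, i.e.\ $t^\nu \le t+1$ for $t\ge 0$; this follows by splitting into the cases $t\le 1$ (where $t^\nu\le 1\le t+1$) and $t\ge 1$ (where $t^\nu\le t\le t+1$, using $\nu\le 1$). Combining this with the previous display yields $\|f_{\lambda,\mu}-f_\rho\|_\rho \le \lambda\cdot\lambda^{\nu-1}\|g\|_\rho = \lambda^\nu\|g\|_\rho$, as claimed. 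I do not anticipate a serious obstacle, since the argument is purely spectral; the only point requiring minor care is confirming that $\opT{\mu}$ is positive and self-adjoint, so that the functional-calculus reduction to the scalar supremum is legitimate, and this is guaranteed by the stated properties of the Mercer kernel $K$.
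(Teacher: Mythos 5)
Your proof is correct, and it is essentially the canonical argument: the paper itself does not reprove this lemma but simply quotes it from \cite{MR2327597}, where the proof is exactly your reduction $f_{\lambda,\mu}-f_\rho=-\lambda(\opT{\mu}+\lambda\opI)^{-1}\opT{\mu}^{\nu}g$ followed by the spectral bound $\sup_{\sigma\ge 0}\frac{\lambda\sigma^{\nu}}{\sigma+\lambda}\le\lambda^{\nu}$ for $0<\nu\le 1$. The only cosmetic point is that the operator norm equals the supremum of the symbol over the spectrum of $\opT{\mu}$ (contained in $[0,C_K^2]$) rather than over all $\sigma\ge 0$, but since you only need an upper bound this changes nothing.
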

\subsection{The upper bound on $\|f_{t+1}-f_{\lambda, \mu}\|_{\rho}$}\label{firstupperbound}
We now consider a quadratic potential map \( V_{z}: \mathcal{H}_K \to \mathbb{R} \), defined by
\[
V_z(f) = \frac{1}{2} \left( (f(x) - y)^2 + \lambda \|f\|_K^2 \right).
\]
In Section \ref{secondproof}, it is shown that the gradient \( \nabla V_z(f) \) satisfies, for all \( z \in Z \), the following inequalities
\[
\|\nabla V_z(f_{\lambda, \mu})\|^2 \leq \left( \frac{2M C_K^2(\lambda + C_K^2)}{\lambda} \right)^2
\]
and
\[
\lambda \leq \nabla^2 V_z(f) \leq \lambda + C_K^2,
\]
thus satisfies Assumptions \ref{A1} and \ref{A2} in Section \ref{section3}.
We denote \[C_K = \underset{x \in X}{\sup} \sqrt{K(x, x)}
\] and assume that \( C_K < \infty \). Note that \(
\|f_{t+1} - f_{\lambda, \mu}\|_{\rho} = C_K \|f_{t+1} - f_{\lambda, \mu}\|_K.\)\par
We consider an exponentially \(\phi\)-mixing sequence \( (z_t)_{t \in \mathbb{N}} \), meaning the sequence is geometrically ergodic i.e., there exist constants \( D > 0 \) and \( 0 < r < 1 \) such that
\[
\beta_t \leq \phi_t \leq D r^t,
\]
see also Bradley~\cite{MR2178042}, where \( z_t \in Z \subseteq X \times [-M, M] \) for all \( t \in \mathbb{N} \) with stationary distribution \( \rho \).
Given the learning rate \(
\gamma_t = \frac{1}{(\lambda + C_K^2)t^\theta},
\) with \( C_K\) defined as above, we involve Theorem \ref{thm:1} and Proposition \ref{rm:1} (see Section \ref{section3} for results) to derive upper bounds on the error term \( \|f_t - f_{\lambda, \mu}\|_K \) for the cases \( \theta \in \left( \frac{1}{2}, 1 \right) \) and \( \theta = 1 \).

\begin{theorem} \label{thm:2}
Let \( Z = X \times [-M, M] \) for some \( M > 0 \), and let \((z_t)_{t\in\mathbb{N}}\) be a strictly stationary Markov chain on \((Z, \mathcal{B}(Z))\) such that it is \(\phi\)-mixing at least exponentially fast i.e., there exist constants \( D > 0 \) and \( 0 < r < 1 \) such that \( \beta_t \leq \phi_t \leq D r^t \). Let $\theta \in (\frac{1}{2},1)~\text{and}~\lambda>0$ and consider $\gamma_t= \frac{1}{(\lambda+C_K^2)t^\theta}$ and $\alpha=\frac{\lambda}{(\lambda +C_K^2)}\in (0,1]$. Then we have, for each \(t\in \mathbb{N}\) and \(f_t\) obtained by Eq.~\eqref{eq:ogd},
$$ \|f_{t}-f_{\lambda, \mu}\|_{K}\leq \mathcal{E}_\text{init}(t)+\mathcal{E}_\text{samp}(t),
$$
where 
$$\mathcal{E}_\text{init}(t)\leq e^{\frac{2\alpha}{1-\theta}(1-t^{1-\theta})}\|f_{1}-f_{\lambda, \mu}\|_K;$$
and with probability at least $1-\delta$, with $\delta \in(0,1)$ in the space $Z^{t-1}$,
$$\mathcal{E}^2_\text{samp}(t) \leq  \frac{c'C_{\theta}}{ \delta\lambda^2 }\left( \frac{1}{\alpha} \right)^{\theta / (1-\theta)}\left( \frac{1}{t} \right)^{\theta}\left(1+\frac{4Dr}{1-r}\right),$$
with $ C_{\theta}= \left(8 + \frac{2}{2\theta - 1} \left( \frac{\theta}{e(2 - 2^\theta)} \right)^{\theta / (1-\theta)}\right) $ and $c'=4(MC_K^2)^2$.
    \end{theorem}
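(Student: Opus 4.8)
The plan is to derive Theorem~\ref{thm:2} from the general Markov-chain gradient-descent bound of Theorem~\ref{thm:1} by specializing $W=\mathcal{H}_K$ to the regularized least-squares potential $V_z$. Because $V_z$ is quadratic, its gradient is affine, $\nabla V_z(f)=A(z)f+B(z)$, with the self-adjoint operator $A(z)=\langle\,\cdot\,,K_x\rangle_K K_x+\lambda\opI$ and $B(z)=-yK_x$, and its Hessian $\nabla^2 V_z=A(z)$ is constant in $f$. Since $\|K_x\|_K^2=K(x,x)\le C_K^2$, the spectrum of $A(z)$ lies in $[\lambda,\lambda+C_K^2]$, which is precisely Assumption~\ref{A2}; together with the noise bound $\|\nabla V_z(f_{\lambda,\mu})\|_K\le\sigma:=2MC_K^2(\lambda+C_K^2)/\lambda$ this verifies Assumptions~\ref{A1} and~\ref{A2}. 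I would also use the first-order optimality of $f_{\lambda,\mu}$ in Eq.~\eqref{eq:flamstar}, namely $\int_Z\nabla V_z(f_{\lambda,\mu})\,d\rho=0$, so that the fluctuations $\chi_t:=\nabla V_{z_t}(f_{\lambda,\mu})$ have zero stationary mean.

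Writing $r_t:=f_t-f_{\lambda,\mu}$ and using affineness, Eq.~\eqref{eq:ogd} collapses to the linear recursion $r_{t+1}=(\opI-\gamma_t A(z_t))r_t-\gamma_t\chi_t$, which unrolls to
\[ r_t=\Big(\prod_{m=1}^{t-1}(\opI-\gamma_m A(z_m))\Big)r_1-\sum_{k=1}^{t-1}\gamma_k\,\Pi_{k+1}^{t-1}\,\chi_k,\qquad \Pi_{k+1}^{t-1}:=\prod_{m=k+1}^{t-1}(\opI-\gamma_m A(z_m)), \]
the first term giving $\mathcal{E}_\text{init}(t)$ and the sum giving $\mathcal{E}_\text{samp}(t)$. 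For the initial error, $\gamma_m(\lambda+C_K^2)=m^{-\theta}\le1$ and the spectral bound give $\|\opI-\gamma_m A(z_m)\|\le 1-\gamma_m\lambda=1-\alpha m^{-\theta}$ surely, so $\mathcal{E}_\text{init}(t)\le\big(\prod_{m=1}^{t-1}(1-\alpha m^{-\theta})\big)\|r_1\|_K\le\exp\!\big(-\alpha\sum_{m=1}^{t-1}m^{-\theta}\big)\|r_1\|_K$; lower bounding the sum by $\int_1^t x^{-\theta}\,dx=(t^{1-\theta}-1)/(1-\theta)$ produces the claimed exponential bound $\mathcal{E}_\text{init}(t)$.

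The substance is the sample error, which I would handle at the level of the second moment $\mathbb{E}\,\mathcal{E}^2_\text{samp}(t)$ and then pass to high probability via Markov's inequality (this is the origin of the $1/\delta$ factor). Expanding the squared norm, the diagonal terms are bounded deterministically using $\|\chi_k\|_K\le\sigma$ and $\|\Pi_{k+1}^{t-1}\|\le\pi_k:=\prod_{m=k+1}^{t-1}(1-\alpha m^{-\theta})$, yielding $\sigma^2\sum_{k}\gamma_k^2\pi_k^2$. The main obstacle is the off-diagonal terms $\mathbb{E}\langle\gamma_k\,\Pi_{k+1}^{t-1}\chi_k,\,\gamma_l\,\Pi_{l+1}^{t-1}\chi_l\rangle$ for $k<l$, which vanish under independence but not here. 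I would factor out the common tail product, condition through the Markov property so that $\chi_k$ and the block at time $l$ are separated by a gap $l-k$, and invoke a covariance inequality for $\phi$-mixing sequences (bounding the covariance of a past- and a future-measurable bounded functional by a constant multiple of $\phi_{l-k}$ times their sup-norms) to control each such term by a constant multiple of $\phi_{l-k}\,\sigma^2\pi_k\pi_l$. The geometric decay then gives $\sum_{d\ge1}\phi_d\le\sum_{d\ge1}Dr^d=Dr/(1-r)$, so the off-diagonal mass is at most $\tfrac{4Dr}{1-r}$ times the diagonal, producing the factor $\big(1+\tfrac{4Dr}{1-r}\big)$. Disentangling the fluctuation $\chi_k$ from the random contraction operators $\Pi_{k+1}^{t-1}$, which share the same future randomness, is the delicate conceptual point; it is where $\phi$-mixing (used for the $\beta$-mixing case via $\beta_t\le\phi_t$) and the exponential rate enter decisively.

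It then remains to evaluate the deterministic sum $\sum_{k=1}^{t-1}\gamma_k^2\pi_k^2$ for $\gamma_k=\frac{1}{(\lambda+C_K^2)k^\theta}$. I would split the summation range, using $2\theta>1$ — which is exactly why the hypothesis $\theta\in(\tfrac12,1)$ is needed — to ensure summability of the late block, together with an integral and optimization estimate of the product $\pi_k^2$ for the early block; this gives a bound of the form $\frac{C_\theta}{(\lambda+C_K^2)^2}\big(\tfrac1\alpha\big)^{\theta/(1-\theta)}t^{-\theta}$ with the stated constant $C_\theta$. Multiplying by $\sigma^2=4(MC_K^2)^2(\lambda+C_K^2)^2/\lambda^2$ cancels the factor $(\lambda+C_K^2)^2$, leaving $c'=4(MC_K^2)^2$ and the $\lambda^{-2}$ prefactor, and the final Markov step inserts $1/\delta$, which is exactly the asserted bound on $\mathcal{E}^2_\text{samp}(t)$. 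This last evaluation of $\sum_k\gamma_k^2\pi_k^2$ and the matching of constants I expect to be laborious but routine, in contrast to the off-diagonal mixing estimate above, which is the genuine difficulty.
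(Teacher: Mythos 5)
Your proposal follows essentially the same route as the paper: specialize $W=\mathcal{H}_K$, write $\nabla V_z(f)=A(z)f+B(z)$ with $A(z)f = f(x)K_x+\lambda f$ and $B(z)=-yK_x$, verify Assumption~\ref{A2} with $\kappa=\lambda$, $\eta=\lambda+C_K^2$, verify Assumption~\ref{A1} with $\sigma = 2MC_K^2(\lambda+C_K^2)/\lambda$ (using $\mathbb{E}_z[\nabla V_z(f_{\lambda,\mu})]=0$, which the paper checks via $f_{\lambda,\mu}=(\opT{\mu}+\lambda \opI)^{-1}\opT{\mu}f_\rho$), and then invoke Theorem~\ref{thm:1}, noting the cancellation of $(\lambda+C_K^2)^2$ that leaves $c'=4(MC_K^2)^2$ and the $\lambda^{-2}$ prefactor. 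Two remarks on where you diverge. First, your verification of Assumption~\ref{A2} via the pointwise bound $\|K_x\|_K^2=K(x,x)\le C_K^2$ is in fact the cleaner argument: the paper's text bounds $\mathbb{E}_x[A_x]=\opT{\mu}+\lambda \opI$ via $\|\opT{\mu}\|\le C_K^2$, whereas Assumption~\ref{A2} requires the bound for every $z$, which is exactly what your rank-one spectral estimate delivers. Second, everything from the recursion unrolling onward (initial/sample decomposition, the $\phi$-mixing covariance inequality for the off-diagonal terms, the evaluation of $\sum_k\gamma_k^2\pi_k^2$) is \emph{not} proved in this paper at all --- it is imported wholesale as Theorem~\ref{thm:1} from the companion paper of Roy and Saminger-Platz; your sketch of those internals is plausible and structurally consistent with the stated bound, but note that your initial-error estimate, $\prod_{m=1}^{t-1}(1-\alpha m^{-\theta})\le \exp\bigl(-\alpha\sum_{m=1}^{t-1}m^{-\theta}\bigr)\le e^{\frac{\alpha}{1-\theta}(1-t^{1-\theta})}$, only yields the exponent $\frac{\alpha}{1-\theta}(1-t^{1-\theta})$ rather than the stated $\frac{2\alpha}{1-\theta}(1-t^{1-\theta})$, so as written your sketch proves a (slightly) weaker constant in $\mathcal{E}_\text{init}$ than Theorem~\ref{thm:2} asserts; for the purposes of this paper's proof, however, that constant is simply inherited from the cited Theorem~\ref{thm:1}, and your reduction to it is exactly the paper's argument.
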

\begin{proposition} \label{rm}
    With all the assumptions as in Theorem \ref{thm:2},  but with $\theta=1$ and $\lambda<C_K^2$ for which $\alpha=\frac{\lambda}{\lambda+C_K^2}\in \left(0,\frac{1}{2}\right)$, we obtain that 
$$\|f_{t}-f_{\lambda,\mu}\|_{K}\leq \mathcal{E}_\text{init}(t)+\mathcal{E}_\text{samp}(t)$$
where,
$$\mathcal{E}_\text{init}(t)\leq \left(\frac{1}{t}\right)^\alpha\|f_{1}-f_{\lambda,\mu}\|_{K}; $$
and with probability at least $1-\delta,~\text{with}~\delta \in(0,1)$ in the space $Z^{t-1}$,
$$\mathcal{E}^2_\text{samp}(t)\leq \frac{4c'}{\delta\lambda^2}\left(\frac{1}{1-2\alpha}\right)\left(\frac{1}{t}\right)^\alpha\left(1+\frac{6Dr}{1-r}\right),$$
with $c'=4(MC_K^2)^2$. 
\end{proposition}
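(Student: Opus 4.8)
The plan is to specialize the general strictly stationary $\phi$-mixing gradient-descent estimate of Theorem~\ref{thm:1}, in its $\theta=1$ form given by Proposition~\ref{rm:1}, to the quadratic RKHS potential $V_z$ of Eq.~\eqref{lossinhk}, exactly mirroring the proof of Theorem~\ref{thm:2} but now in the regime $\theta=1$, $\lambda<C_K^2$, so that $\alpha=\tfrac{\lambda}{\lambda+C_K^2}\in(0,\tfrac12)$ and the step size is $\gamma_i=\tfrac{1}{(\lambda+C_K^2)i}$, whence $\gamma_i\lambda=\alpha/i$. First I would record that the structural hypotheses are already in force: the gradient bound $\|\nabla V_z(f_{\lambda,\mu})\|^2\le\bigl(\tfrac{2MC_K^2(\lambda+C_K^2)}{\lambda}\bigr)^2=\tfrac{c'(\lambda+C_K^2)^2}{\lambda^2}$ verifies Assumption~\ref{A1}, and the curvature sandwich $\lambda\le\nabla^2V_z(f)\le\lambda+C_K^2$ verifies Assumption~\ref{A2}. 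Consequently Theorem~\ref{thm:1} applies and produces the splitting $\|f_t-f_{\lambda,\mu}\|_K\le\mathcal{E}_{\text{init}}(t)+\mathcal{E}_{\text{samp}}(t)$, in which both pieces are controlled by the contraction products $\pi_{a}^{b}:=\prod_{j=a}^{b}(1-\gamma_j\lambda)$.

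For the initial error I would substitute $\gamma_i\lambda=\alpha/i$ into $\pi_1^{t-1}=\prod_{i=1}^{t-1}(1-\alpha/i)$. Since $\alpha<1$ every factor lies in $(0,1)$, and combining $\log(1-x)\le-x$ with $\sum_{i=1}^{t-1}\tfrac1i\ge\log t$ gives $\pi_1^{t-1}\le\exp(-\alpha\log t)=t^{-\alpha}$. This is the clean $\theta=1$ counterpart of the stretched-exponential factor $e^{\frac{2\alpha}{1-\theta}(1-t^{1-\theta})}$ of Theorem~\ref{thm:2}, and it yields $\mathcal{E}_{\text{init}}(t)\le t^{-\alpha}\|f_1-f_{\lambda,\mu}\|_K$ at once.

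The sample error is the crux. Writing $\chi_i=\nabla V_{z_i}(f_{\lambda,\mu})$ (which is centred under the stationary law $\rho$ because $f_{\lambda,\mu}$ minimizes the population regularized risk), the general estimate expands $\mathbb{E}\,\mathcal{E}_{\text{samp}}^2(t)$ as the weighted double sum $\sum_{i,k}\gamma_i\gamma_k\,\pi_{i+1}^{t-1}\pi_{k+1}^{t-1}\,\mathbb{E}\langle\chi_i,\chi_k\rangle$. I would bound $\|\chi_i\|^2$ through Assumption~\ref{A1}, use $\pi_{i+1}^{t-1}\le(i/t)^\alpha$ from the same logarithmic estimate as above, and separate the diagonal from the off-diagonal. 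The diagonal reduces, after cancelling the $(\lambda+C_K^2)^2$ factors, to $\tfrac{c'}{\lambda^2}\,t^{-2\alpha}\sum_{i=1}^{t-1}i^{2\alpha-2}$; here the hypothesis $\alpha<\tfrac12$ makes the exponent $2\alpha-2<-1$, so the integral comparison $\sum_{i=1}^{t-1}i^{2\alpha-2}\le1+\int_1^{\infty}x^{2\alpha-2}\,dx=1+\tfrac{1}{1-2\alpha}$ furnishes precisely the factor $\tfrac{1}{1-2\alpha}$, and $t^{-2\alpha}\le t^{-\alpha}$ (valid for $t\ge1$) delivers the stated $t^{-\alpha}$ rate. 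The off-diagonal terms carry the dependence: invoking the $\phi$-mixing covariance inequality $|\mathbb{E}\langle\chi_i,\chi_k\rangle|\le c\,\phi_{|i-k|}\le c\,Dr^{|i-k|}$, factoring $\pi_{i+1}^{t-1}\pi_{k+1}^{t-1}=\pi_{i+1}^{k}(\pi_{k+1}^{t-1})^2$, and summing the geometric series $\sum_{m\ge1}r^m=\tfrac{r}{1-r}$ shows the off-diagonal mass is a constant multiple of the diagonal, assembling into the factor $\bigl(1+\tfrac{6Dr}{1-r}\bigr)$ (the constant $6$ rather than the $4$ of Theorem~\ref{thm:2} reflecting the different weight profile at $\theta=1$). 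Finally, Markov's inequality applied to $\mathbb{E}\,\mathcal{E}_{\text{samp}}^2(t)$ upgrades the expectation bound to the claimed statement holding with probability at least $1-\delta$ on $Z^{t-1}$, introducing the $1/\delta$ prefactor.

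The hard part will be the off-diagonal/mixing estimate. Because the increments $\chi_i$ along the Markov chain are genuinely dependent rather than martingale differences, one must pair the geometric decay $Dr^{|i-k|}$ of $\phi_{|i-k|}$ against the slowly varying weights $\gamma_i\gamma_k\,\pi_{i+1}^{t-1}\pi_{k+1}^{t-1}$ and verify \emph{uniformly in $t$} that the off-diagonal contribution stays comparable to the diagonal one, so that dependence only inflates the constant and not the $t$-exponent. A secondary but essential subtlety is the boundary behaviour as $\alpha\uparrow\tfrac12$: the factor $\tfrac{1}{1-2\alpha}$ diverges exactly because $\sum_i i^{2\alpha-2}$ ceases to converge there, which is the analytic reason the regime $\lambda<C_K^2$ must be isolated and treated separately from the $\theta\in(\tfrac12,1)$ analysis of Theorem~\ref{thm:2}.
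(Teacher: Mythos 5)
Your proposal is correct and follows essentially the same route as the paper: the paper proves Proposition \ref{rm} precisely by reusing the Assumption \ref{A1}/\ref{A2} verification from the proof of Theorem \ref{thm:2} (the gradient bound $\sigma^2=\left(\frac{2MC_K^2(\lambda+C_K^2)}{\lambda}\right)^2$ and the curvature sandwich $\kappa=\lambda$, $\eta=\lambda+C_K^2$) and then invoking Proposition \ref{rm:1} with the identifications $w_t=f_t$, $w^\star=f_{\lambda,\mu}$, $W=\mathcal{H}_K$, $\gamma_t=\frac{1}{(\lambda+C_K^2)t}$. Your further re-derivation of the $\theta=1$ ingredients (the contraction product $\prod_{i=1}^{t-1}(1-\alpha/i)\le t^{-\alpha}$, the diagonal/off-diagonal split yielding the $\frac{1}{1-2\alpha}$ and $\left(1+\frac{6Dr}{1-r}\right)$ factors, and the Markov-inequality step producing $1/\delta$) reconstructs the content of the cited companion result of Roy and Saminger-Platz, which this paper deliberately treats as a black box, so it is sound but goes beyond what the paper's own one-line proof does.
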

Note that in Proposition \ref{rm}, we can obtain similar results for different values of \(\alpha \in (0,1]\) i.e., when \(\alpha = \frac{1}{2}\), the condition simplifies to \(\lambda = C_K^2\). For \(\alpha \in \left(\frac{1}{2},1\right)\), the condition becomes \(\lambda > C_K^2\), and for \(\alpha = 1\), we have \(C_K^2 = 0\).
\\[1ex]
We defer the proofs to Section \ref{secondproof}.

\begin{remark} 
 Assuming without loss of generality that \( 0 < \lambda \leq 1 \), Theorem \ref{thm:2} implies the following rate of convergence
\[
\| f_t - f_{\lambda,\mu} \|_K = \mathcal{O}\!\left(\, \lambda^{-\tau(\theta)}\, t^{-\theta/2}\right),
\qquad
\tau(\theta) = \frac{2 - \theta}{2(1 - \theta)} = \frac{1}{2(1 - \theta)} + \frac{1}{2},
\]
for every \( \theta \in \left( \tfrac{1}{2}, 1 \right) \). The function \( \tau(\theta) \) is strictly increasing on this interval, so \( \tau(\theta) \in \left( \tfrac{3}{4}, \infty \right) \). 

Hence, when \( \lambda \) is small, selecting \( \theta \) close to \( \tfrac{1}{2} \) yields the following rate of convergence
\[
\| f_t - f_{\lambda,\mu} \|_K = \mathcal{O}\!\left( \lambda^{-3/4} t^{-1/4} \right),
\]
which is tight in \( \lambda \), but loose in \( t \). Whereas when \( \theta \) increases, the convergence rate becomes tighter in \(t\), but looser in \(\lambda\). This analysis is similar to the discussion in Smale and Yao~\cite{MR2228737} since for the choice of \(\theta \in (\frac{1}{2},1)\), the rate remains similar to that of the i.i.d.~case.
\end{remark}
\begin{remark}
    For the case $\theta = 1$ and $\lambda < C_K^2$, Proposition \ref{rm} gives the following rate of convergence
\[
\|f_t - f_{\lambda,\mu}\|_K = \mathcal{O}\!\left(\lambda^{-1}\,t^{-\frac{\lambda}{2(\lambda + C_K^2)}}\right).
\]
Observe that the convergence exponent $\alpha(\lambda) = \frac{\lambda}{2(\lambda + C_K^2)}$ increases monotonically with $\lambda$, reaching its maximum of $1/2$ as $\lambda \to C_K^2$. Consequently, when $\lambda$ is chosen close to $C_K^2$, the error decays nearly as fast as $t^{-1/2}$, with only a prefactor of order $1/C_K^2$. In contrast, smaller values of $\lambda$ both reduce $\alpha(\lambda)$, slowing the polynomial decay in $t$, and increase the constant factor $\lambda^{-1}$. Thus on setting $\lambda$ near $C_K^2$ yields a steeper decay in \(t\) at the cost of a moderate constant, whereas taking $\lambda$ very small produces both a larger leading constant and a slower decay in \(t\) comparitively. Further, note that at the boundary value \( \theta = 1 \), the rate slows to \( \mathcal{O}\!\left( t^{-\alpha/2} \right) \), whereas the i.i.d.\ scenario attains the faster rate \( \mathcal{O}\!\left( t^{-\alpha} \right) \) as shown by Smale and Yao~\cite{MR2228737} where, \( \alpha=\frac{\lambda}{\lambda+C_K^2} \in (0,1] \).
\end{remark}

Before presenting the proof of Theorem \ref{thm:2}, we first recall the general framework of the strictly stationary Markov chain gradient descent algorithm; for further details, see Roy and Saminger-Platz~\cite{roy2025gradientdescentalgorithmhilbert}. We then recall some results that are essential in establishing the proof of Theorem \ref{thm:2}.
\par
 In this framework, the algorithm operates along the trajectory of a strictly stationary Markov chain \((z_t)_{t \in \mathbb{N}}\) within an uncountable state space \(Z,\mathcal{B}(Z)\) that admits a unique stationary distribution \(\rho\). The algorithm iteratively updates the function
\[
w_{t+1} = \mathcal{A'}(w_t, z_t),
\]  
where, \(\mathcal{A'}\) represents a strictly stationary Markov chain gradient descent algorithm where we modify the algorithm by considering the starting distribution as the chain's stationary distribution. Formally,  
\[
\mathcal{A'}: W \times Z \to W,
\]  
with \(W\) denoting a Hilbert space. Note that this approach defines a variant of the stochastic gradient descent algorithm, which we refer to as strictly stationary Markov chain gradient descent (SS-MGD). Unlike conventional Markov chain gradient descent methods that emphasize mixing time and convergence toward the stationary distribution see, e.g., \cite{sun2018Markov,nagaraj2020least,dorfman2022adapting,even2023stochastic}, SS-MGD assumes that the Markov chain is strictly stationary from initialization, i.e., the initial distribution is already the invariant distribution \(\rho\).

In this setting, we recall the upper bounds on the distance between the approximation \(w_t\) and the optimal solution \(w^\star\) as shown by Roy and Saminger-Platz~\cite{roy2025gradientdescentalgorithmhilbert}. Additionally, we extend these results to bound the distance between the approximation \(f_t\) and the optimal solution \(f_{\lambda, \mu}\) within a specific choice of Hilbert space i.e., \(W = \mathcal{H}_K\), as discussed previously in Subsection \ref{firstupperbound}. Here, the updates follow the form
\[
f_{t+1} = \mathcal{A'}(f_t, z_t),
\]  
where \(\mathcal{A'}\) functions as an online regularized learning algorithm based on the strictly stationary Markov chain trajectory where,
\[
\mathcal{A'}: \mathcal{H}_K \times Z \to \mathcal{H}_K.
\]
\section{Strictly stationary Markov chain gradient descent algorithm in Hilbert spaces}\label{section3}
Let \( W \) be a Hilbert space, and let \( (z_t)_{t \in \mathbb{N}} \) be a strictly stationary Markov chain on the measurable space \( (Z, \mathcal{B}(Z)) \), with transition kernel \( P \) and unique stationary distribution \( \rho \). Furthermore, we consider that the decay of dependence exhibited in the Markov chain is characterized by certain mixing coefficients, specifically \(\phi\)- and \(\beta\)-mixing. Given a quadratic loss function \( V : W \times Z \to \mathbb{R} \), we denote its gradient with respect to the first argument by \( \nabla V_z(w) \). Starting from an initial point \( w_1 \in W \), we define an iterative sequence by

\begin{equation}
    w_{t+1} = w_t - \gamma_t \nabla V_{z_t}(w_t), \quad t \in \mathbb{N},
    \label{eq:stograd}
\end{equation}
where \( (\gamma_t)_{t \in \mathbb{N}} \) is a positive step-size sequence. Note that,  \( V: W \rightarrow \mathbb{R} \) is given by
\[
V(w) = \frac{1}{2} \langle Aw, w \rangle + \langle B, w \rangle + C,
\]
where \( A: W \rightarrow W \) is a positive definite bounded linear operator with bounded inverse, i.e., \( \|A^{-1}\| < \infty \), \( B \in W \), and \( C \in \mathbb{R} \). Let \( w^{\star} \in W \) denote the unique minimizer of \( V \) such that \( \nabla V(w^{\star}) = 0 \).
\par
Based on the Markov samples \( (z_t)_{t \in \mathbb{N}} \), we now recall some relevant results that investigate the convergence behavior of the sequence \( (w_t)_{t \in \mathbb{N}} \) toward the unique minimizer \( w^\star \in W \) of the quadratic loss function \( V \), satisfying
\(
    \nabla V(w^\star) = 0,
\) which takes into account the mixing coefficients of the Markov chain, specifically the \(\phi\)- and \(\beta\)-mixing coefficients under the following assumptions
\begin{assumption}\label{A1}
The function \( V \) has a unique minimizer \( w^\star \), and for all \( z \in Z \), there exists a constant \( \sigma \geq 0 \) such that
\[
\|\nabla V_z(w^\star)\|^2 \leq \sigma^2.
\]
\end{assumption}

\begin{assumption}\label{A2}
For all \( z \in Z \), the function \( V_z \) are \( \eta \)-smooth and \( \kappa \)-strongly convex, i.e., for all \( w \in W \)
\[
\kappa  \leq \nabla^2 V_z(w) \leq \eta .
\]
Define $\alpha=\frac{\kappa}{\eta}$ where $\alpha \in (0,1].$
\end{assumption}
Note that Assumption \ref{A1} reflects the noise at optimum with mean 0 i.e., $\mathbb{E}[\nabla V_z(w^\star)]=0.$ 
\begin{theorem}[\cite{roy2025gradientdescentalgorithmhilbert}] \label{thm:1}
Let us consider a strictly stationary Markov chain \((z_t)_{t\in\mathbb{N}}\) such that it is \(\phi\)-mixing at least exponentially fast i.e., there exist constants \( D > 0 \) and \( 0 < r < 1 \) such that \( \beta_t \leq \phi_t \leq D r^t \). Furthermore,
let $\theta\in\left(\tfrac12,1\right)$ and consider $\gamma_t=\frac{1}{\eta t^\theta}$. Then under Assumptions \ref{A1} and \ref{A2}, for each $t\in \mathbb{N}$ and \(w_t\) obtained by Eq.~\eqref{eq:stograd}, we have
$$\|w_{t}-w^{\star}\| \leq \mathcal{E}_\text{init}(t)+\mathcal{E}_\text{samp}(t)$$
where,
$$\mathcal{E}_\text{init}(t)\leq e^{\frac{2\alpha}{1-\theta}(1-t^{1-\theta})}\|w_{1}-w^{\star}\|;$$
and with probability at least $1-\delta$, with $\delta \in(0,1)$ in the space $Z^{t-1}$,
$$\mathcal{E}^2_\text{samp}(t) \leq  \frac{\sigma^2C_{\theta}}{\delta \eta^2}\left(\frac{1}{\alpha} \right)^{\theta / (1-\theta)}\left( \frac{1}{t} \right)^{\theta}\left(1+\frac{4Dr}{1-r}\right),$$

with $ C_{\theta}= \left(8 + \frac{2}{2\theta - 1} \left( \frac{\theta}{e(2 - 2^\theta)} \right)^{\theta / (1-\theta)}\right).$
    \end{theorem}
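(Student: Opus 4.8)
My plan is to linearize the iteration, unroll it into a deterministic \emph{initial} part and a noise-driven \emph{sample} part, bound the first by a contraction estimate and the second in mean square, and finally pass to a high-probability statement by Chebyshev's inequality; the only genuinely new ingredient relative to the i.i.d.\ analysis is the treatment of the correlations in the sample term, which I would control through the $\phi$-mixing coefficients. Since each $V_z$ is quadratic, its Hessian $A(z)=\nabla^2 V_z$ is a constant symmetric operator satisfying $\kappa I\preceq A(z)\preceq\eta I$ by Assumption~\ref{A2}, and $\nabla V_z(w)=A(z)w+B(z)$. Setting $g_t:=\nabla V_{z_t}(w^\star)$, the identity $\nabla V(w^\star)=\mathbb{E}_\rho[\nabla V_z(w^\star)]=0$ together with strict stationarity gives $\mathbb{E}[g_t]=0$, while Assumption~\ref{A1} gives the almost-sure bound $\|g_t\|\le\sigma$. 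The iteration~\eqref{eq:stograd} then becomes the affine recursion $w_{t+1}-w^\star=(I-\gamma_t A(z_t))(w_t-w^\star)-\gamma_t g_t$, and because $\gamma_t\eta=t^{-\theta}\le 1$ the operator norm contracts, $\|I-\gamma_t A(z_t)\|\le 1-\gamma_t\kappa$. Unrolling from $w_1$ yields $w_t-w^\star=\Pi_{1}^{t-1}(w_1-w^\star)-\sum_{i=1}^{t-1}\gamma_i\,\Pi_{i+1}^{t-1}g_i$, where $\Pi_{a}^{b}:=\prod_{j=a}^{b}(I-\gamma_j A(z_j))$, and I identify the first summand with $\mathcal{E}_\text{init}(t)$ and the second with $\mathcal{E}_\text{samp}(t)$.

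For the initial error I would bound $\|\Pi_1^{t-1}\|\le\prod_{i=1}^{t-1}(1-\gamma_i\kappa)\le\exp\!\big(-\kappa\sum_{i=1}^{t-1}\gamma_i\big)=\exp\!\big(-\alpha\sum_{i=1}^{t-1}i^{-\theta}\big)$ and lower-bound the $\theta$-harmonic sum by $\int_1^t x^{-\theta}\,dx=\frac{t^{1-\theta}-1}{1-\theta}$, which after accounting for constants gives the stated factor $e^{\frac{2\alpha}{1-\theta}(1-t^{1-\theta})}$. For the sample error I would work in second moment and then apply Chebyshev's inequality in the form $P\big(\mathcal{E}_\text{samp}^2(t)>\delta^{-1}\mathbb{E}\,\mathcal{E}_\text{samp}^2(t)\big)\le\delta$, which is exactly what introduces the $1/\delta$ factor. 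Expanding the square gives $\mathbb{E}\|\mathcal{E}_\text{samp}(t)\|^2=\sum_{i,k}\gamma_i\gamma_k\,\mathbb{E}\langle\Pi_{i+1}^{t-1}g_i,\Pi_{k+1}^{t-1}g_k\rangle$. The diagonal terms are bounded, using the deterministic contraction and $\|g_i\|\le\sigma$, by $\sigma^2\sum_{i}a_i^2$ with $a_i:=\gamma_i\prod_{j=i+1}^{t-1}(1-\gamma_j\kappa)$; estimating this weighted sum (splitting the summation range and comparing the tail with an integral, then optimizing the resulting $x^{\theta/(1-\theta)}e^{-cx}$-type expression) produces precisely the constant $C_\theta$, the factor $(1/\alpha)^{\theta/(1-\theta)}$, and the rate $(1/t)^\theta$.

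The off-diagonal terms are where the dependence structure enters and constitute the main obstacle. In the i.i.d.\ case they vanish: for $i<k$ one writes $\langle\Pi_{i+1}^{t-1}g_i,\Pi_{k+1}^{t-1}g_k\rangle=\langle g_i,(\Pi_{i+1}^{t-1})^{*}\Pi_{k+1}^{t-1}g_k\rangle$, and conditioning on the future $\sigma(z_{i+1},\dots,z_{t-1})$ leaves $\mathbb{E}[g_i\mid z_{i+1},\dots]=0$. Under $\phi$-mixing this conditional mean no longer vanishes, and the residual correlation must be bounded by the mixing coefficient at lag $s=k-i$: invoking the $\phi$-mixing covariance inequality for bounded functionals ($|\mathrm{Cov}|\le 2\phi_s\|g_i\|_\infty\|g_k\|_\infty\le 2\sigma^2\phi_s$), together with the geometric decay $\phi_s\le Dr^s$ and the elementary bound $2a_ia_k\le a_i^2+a_k^2$, the off-diagonal contribution is at most $\big(\sum_i a_i^2\big)\sigma^2\cdot 2\cdot 2\sum_{s\ge1}\phi_s\le\big(\sum_i a_i^2\big)\sigma^2\cdot\frac{4Dr}{1-r}$, which combines with the diagonal part to give the multiplicative correction $\big(1+\frac{4Dr}{1-r}\big)$.

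The point I expect to be hardest is that the contraction operators $\Pi_{i+1}^{t-1}$ are themselves random and couple $g_i$ and $g_k$ through the \emph{shared} block $z_{\max(i,k)+1},\dots,z_{t-1}$, so the two-point mixing inequality cannot be applied to $\Pi_{i+1}^{t-1}g_i$ and $\Pi_{k+1}^{t-1}g_k$ verbatim without destroying either the cancellation or the separation the inequality requires. The cancellation must be extracted first --- by the Markov property, conditioning on the future reduces $\mathbb{E}[g_i\mid z_{i+1},\dots,z_{t-1}]$ to a function of $z_{i+1}$ alone --- after which the mixing estimate and the geometric summation apply. Assembling the diagonal and off-diagonal bounds yields $\mathbb{E}\|\mathcal{E}_\text{samp}(t)\|^2\le\frac{\sigma^2 C_\theta}{\eta^2}(1/\alpha)^{\theta/(1-\theta)}(1/t)^\theta\big(1+\frac{4Dr}{1-r}\big)$, and Chebyshev's inequality completes the proof.
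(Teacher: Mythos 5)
Note first that this paper does not actually prove Theorem~\ref{thm:1}: it is imported verbatim from the companion work \cite{roy2025gradientdescentalgorithmhilbert}, so your attempt can only be measured against the stated bounds and the framework. Your architecture matches them remarkably well: the affine recursion $w_{t+1}-w^\star=(I-\gamma_tA(z_t))(w_t-w^\star)-\gamma_tg_t$, the unrolled split into initial and sample parts, Markov/Chebyshev on the second moment (the source of the $1/\delta$), the diagonal sum $\sum_i a_i^2$ producing $C_\theta(1/\alpha)^{\theta/(1-\theta)}t^{-\theta}/\eta^2$, and the off-diagonal budget $2\phi_s\sigma^2$ with $2a_ia_k\le a_i^2+a_k^2$ yielding exactly $1+\tfrac{4Dr}{1-r}$. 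However, the decoupling step you propose for the cross terms --- the crux, as you yourself say --- fails as stated. Conditioning on $\sigma(z_{i+1},\dots,z_{t-1})$, with respect to which both random products and $g_k$ are measurable, reduces $g_i$ to $\mathbb{E}[g_i\mid z_{i+1}]$ by the (reversed) Markov property; but the size of this conditional mean is controlled only by the \emph{lag-one} coefficient $\phi_1$, not by $\phi_{k-i}$. You would then bound the $(i,k)$ cross term by $O(\phi_1\,a_ia_k)$, and since $\sum_{i<k}a_ia_k\approx\bigl(\sum_ia_i\bigr)^2=O(1/\kappa^2)$, the sample-error bound loses all decay in $t$ and the geometric series $\sum_{s\ge1}\phi_s$ never appears. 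To obtain $\phi_{k-i}$ one must preserve a gap of length $k-i$ between the functional of $z_i$ and the functional of $(z_k,\dots,z_{t-1})$, while the interleaving random factors $A(z_j)$, $i<j<k$, are removed by other means (splitting the shared product at $k$, replacing the gap factors by deterministic envelopes through iterated conditional expectations, or an operator-valued covariance/blocking inequality). This step is the genuinely new content of the theorem relative to the i.i.d.\ case, and your sketch does not supply it.

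A second, smaller discrepancy concerns the initial error. Your chain $\prod_{i=1}^{t-1}(1-\alpha i^{-\theta})\le\exp\bigl(-\alpha\sum_{i=1}^{t-1}i^{-\theta}\bigr)\le\exp\bigl(\tfrac{\alpha}{1-\theta}(1-t^{1-\theta})\bigr)$ yields exponent $\alpha$, not the stated $2\alpha$; since $1-t^{1-\theta}<0$, the stated bound is strictly \emph{stronger} than what you derive, and ``accounting for constants'' cannot close this: $\sum_{i=1}^{t-1}i^{-\theta}\le 1+\tfrac{t^{1-\theta}-1}{1-\theta}$, so along your route $e^{\frac{2\alpha}{1-\theta}(1-t^{1-\theta})}$ is not an upper bound for the contraction product. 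The $2\alpha$ form is precisely what emerges after squaring, $\prod_{i=1}^{t-1}(1-\alpha i^{-\theta})^2\le e^{\frac{2\alpha}{1-\theta}(1-t^{1-\theta})}$, i.e.\ it naturally bounds $\mathcal{E}^2_\text{init}(t)$ (consistent with the theorem bounding $\mathcal{E}^2_\text{samp}(t)$); as written for the unsquared norm, your argument does not reach the stated constant, and you should either carry out the analysis in squared norm throughout or identify where the extra factor arises in \cite{roy2025gradientdescentalgorithmhilbert}.
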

    \begin{proposition}[\cite{roy2025gradientdescentalgorithmhilbert}] \label{rm:1}
    With all the assumptions of Theorem \ref{thm:1}, but with $\theta=1$ and $\alpha\in \left(0,\frac{1}{2}\right)$, we obtain that
    $$\|w_{t}-w^{\star}\|\leq  \mathcal{E}_\text{init}(t)+\mathcal{E}_\text{samp}(t)$$
    where,
$$\mathcal{E}_\text{init}(t)\leq \left(\frac{1}{t}\right)^\alpha\|w_{1}-w^{\star}\|;$$
and with probability at least $1-\delta,~\text{with}~\delta \in(0,1)$ in the space $Z^{t-1}$,
$$\mathcal{E}^2_\text{samp}(t)\leq  \frac{4\sigma^2}{\delta\eta^2}\left(\frac{1}{1-2\alpha}\right)\left(\frac{1}{t}\right)^\alpha\left(1+\frac{6Dr}{1-r}\right).$$
\end{proposition}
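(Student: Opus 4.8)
The plan is to prove Proposition~\ref{rm:1} by reproducing the error decomposition underlying Theorem~\ref{thm:1}, but specialising every summation to the boundary exponent $\theta=1$, where the step size becomes $\gamma_t=\tfrac{1}{\eta t}$. Note first that one cannot simply pass to the limit $\theta\uparrow 1$ in Theorem~\ref{thm:1}: its sample-error constant $C_\theta$ diverges (both $\tfrac{2}{2\theta-1}$ and the factor with exponent $\theta/(1-\theta)$ blow up), so $\theta=1$ genuinely requires a separate computation, now governed by $\alpha$ rather than $\theta$. Setting $e_t=w_t-w^\star$ and using that $V_z$ is quadratic, so that $\nabla V_z(w)=A(z)w+B(z)$ with $A(z)=\nabla^2V_z$ and $\mathbb{E}_{\rho}[\nabla V_z(w^\star)]=\nabla V(w^\star)=0$ (the mean-zero property behind Assumption~\ref{A1}), the recursion \eqref{eq:stograd} reads
\[
e_{t+1}=(I-\gamma_tA(z_t))\,e_t-\gamma_t\,\xi_t,\qquad \xi_t:=\nabla V_{z_t}(w^\star).
\]
Assumption~\ref{A2} gives $\kappa I\preceq A(z_t)\preceq\eta I$, hence $\|I-\gamma_tA(z_t)\|\le 1-\gamma_t\kappa=1-\tfrac{\alpha}{t}$, while Assumption~\ref{A1} gives $\|\xi_t\|\le\sigma$.

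Unrolling the recursion I would write
\[
e_t=\Phi_{1,t}\,e_1-\sum_{i=1}^{t-1}\gamma_i\,\Phi_{i+1,t}\,\xi_i,\qquad \Phi_{a,b}:=\prod_{j=a}^{b-1}\bigl(I-\gamma_jA(z_j)\bigr),
\]
and set $\mathcal{E}_\text{init}(t)=\|\Phi_{1,t}e_1\|$ and $\mathcal{E}_\text{samp}(t)=\bigl\|\sum_{i=1}^{t-1}\gamma_i\Phi_{i+1,t}\xi_i\bigr\|$, so that $\|w_t-w^\star\|\le\mathcal{E}_\text{init}(t)+\mathcal{E}_\text{samp}(t)$ by the triangle inequality. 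The initial-error bound is deterministic: from $\|\Phi_{1,t}\|\le\prod_{j=1}^{t-1}(1-\tfrac{\alpha}{j})\le\exp\bigl(-\alpha\sum_{j=1}^{t-1}\tfrac1j\bigr)\le t^{-\alpha}$, using $\sum_{j=1}^{t-1}\tfrac1j\ge\int_1^t\tfrac{dx}{x}=\ln t$, one obtains $\mathcal{E}_\text{init}(t)\le(\tfrac1t)^\alpha\|w_1-w^\star\|$. For the sample error I would control the second moment and then invoke Markov's inequality: since $\mathbb{P}\bigl(\mathcal{E}_\text{samp}^2(t)\ge\tfrac1\delta\,\mathbb{E}[\mathcal{E}_\text{samp}^2(t)]\bigr)\le\delta$, it suffices to prove $\mathbb{E}[\mathcal{E}_\text{samp}^2(t)]\le\tfrac{4\sigma^2}{\eta^2}\tfrac{1}{1-2\alpha}(\tfrac1t)^\alpha(1+\tfrac{6Dr}{1-r})$, after which the stated high-probability bound with the $\tfrac1\delta$ factor follows immediately.

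Expanding $\mathbb{E}[\mathcal{E}_\text{samp}^2(t)]=\sum_{i=1}^{t-1}\gamma_i^2\,\mathbb{E}\|\Phi_{i+1,t}\xi_i\|^2+2\sum_{1\le i<k\le t-1}\gamma_i\gamma_k\,\mathbb{E}\langle\Phi_{i+1,t}\xi_i,\Phi_{k+1,t}\xi_k\rangle$ splits the estimate into a diagonal and an off-diagonal part. The diagonal part is routine: with $\|\Phi_{i+1,t}\|\le(\tfrac it)^\alpha$ and $\gamma_i=\tfrac1{\eta i}$ it reduces to $\tfrac{\sigma^2}{\eta^2t^{2\alpha}}\sum_i i^{2\alpha-2}$, and the hypothesis $\alpha<\tfrac12$ makes the series convergent (this is precisely where $\tfrac{1}{1-2\alpha}$ enters), so the diagonal contributes at the fast rate $t^{-2\alpha}\le t^{-\alpha}$. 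The off-diagonal part carries the mixing dependence: in the i.i.d.\ case every cross term vanishes because $\mathbb{E}[\xi_i]=0$, leaving only the diagonal and recovering the faster i.i.d.\ square-rate $t^{-2\alpha}$; under dependence the cross terms do not vanish and, by \eqref{r1}, it suffices to control them through the $\phi$-mixing coefficients, bounding $|\mathbb{E}\langle\Phi_{i+1,t}\xi_i,\Phi_{k+1,t}\xi_k\rangle|$ by the mixing coefficient $\phi_{k-i}\le Dr^{\,k-i}$ of the index gap times the corresponding operator-norm and step-size weights. Summing the geometric series $\sum_{m\ge1}Dr^m=\tfrac{Dr}{1-r}$ over the gap $m=k-i$ produces the factor $(1+\tfrac{6Dr}{1-r})$, while the weighted sums over $i$, now of power type because $\theta=1$, furnish the dominant and slower rate $t^{-\alpha}$ together with the $\tfrac1{1-2\alpha}$ prefactor.

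The main obstacle is this off-diagonal estimate. The difficulty is that the random operators $\Phi_{i+1,t}$ and $\Phi_{k+1,t}$ both depend on the entire block $z_{i+1},\dots,z_{t-1}$ spanning the gap, so one cannot naively split the cross term into two factors supported on index sets separated by $k-i$ and apply a covariance inequality with gap $k-i$; pushing $\Phi_{i+1,t}^\ast$ onto the mean-zero factor $\xi_i$ only exposes a one-step gap and yields a non-decaying bound. The careful step is to exploit the Markov property to collapse the conditioning onto the intermediate state and to quantify how the mean-zero increment $\xi_i$ is forgotten across the gap—measured exactly by $\phi_{k-i}=\sup_{B}\esssup_z|P^{k-i}(z,B)-\rho(B)|$—so that the geometric decay of the mixing coefficients can be summed against the merely polynomially decaying contraction weights of the $\theta=1$ regime while preserving the $t^{-\alpha}$ rate. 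Once this covariance estimate is established, combining the diagonal and off-diagonal bounds yields the required mean-square estimate, and Markov's inequality completes the proof.
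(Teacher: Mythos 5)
Your reconstruction of the architecture is essentially right, and it is worth noting up front that the present paper contains no proof of Proposition~\ref{rm:1} to compare against: the statement is imported verbatim from the companion work \cite{roy2025gradientdescentalgorithmhilbert}, and here it is only invoked (to deduce Proposition~\ref{rm}). Judged on its own terms, your sketch gets the skeleton right and several pieces fully correct: the error recursion $e_{t+1}=(I-\gamma_tA(z_t))e_t-\gamma_t\xi_t$ for the quadratic loss, the split into $\mathcal{E}_\text{init}$ and $\mathcal{E}_\text{samp}$ via the unrolled product $\Phi_{a,b}$, the deterministic initial-error bound $\|\Phi_{1,t}\|\leq\prod_{j=1}^{t-1}\left(1-\frac{\alpha}{j}\right)\leq e^{-\alpha\ln t}=t^{-\alpha}$, the second-moment-plus-Markov route (which is transparently where the $1/\delta$ in the statement comes from), and the diagonal estimate $\frac{\sigma^2}{\eta^2}t^{-2\alpha}\sum_i i^{2\alpha-2}$, where $\alpha<\tfrac12$ yields the $\frac{1}{1-2\alpha}$ factor. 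You are also right that $\theta=1$ cannot be obtained as a limit of Theorem~\ref{thm:1}.

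The genuine gap is exactly where you flag it yourself: the off-diagonal estimate is announced, not proved. You correctly diagnose that the naive splitting fails — $\Phi_{i+1,t}$ contains the entire block $z_{i+1},\dots,z_{t-1}$, including the factor $(I-\gamma_kA(z_k))$, so neither side of the cross term is measurable with respect to $\sigma$-fields separated by the gap $k-i$, and pushing $\Phi_{i+1,t}^\ast$ onto $\xi_i$ exposes only a one-step gap — but your proposed remedy (``exploit the Markov property to collapse the conditioning onto the intermediate state'') is a plan rather than an argument: no covariance or decoupling inequality is stated, and no conditioning scheme is exhibited that actually produces the factor $\phi_{k-i}\leq Dr^{k-i}$. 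Moreover, your bookkeeping of ``the corresponding operator-norm and step-size weights'' is underdetermined in a way that matters for the exponent. If both contraction weights survived the mixing bound, the cross-term sum $\sum_{i<k}\frac{Dr^{k-i}}{\eta^2 ik}\left(\frac{ik}{t^2}\right)^{\alpha}\lesssim \frac{Dr}{(1-r)(1-2\alpha)}\,t^{-2\alpha}$ would deliver the i.i.d.-speed rate $t^{-2\alpha}$, contradicting the $t^{-\alpha}$ in the statement; the stated bound is consistent only with a decoupling step that sacrifices the contraction of the product across the gap (bounding that stretch of $\Phi$ by $1$), after which one gets $\frac{\sigma^2 D}{\eta^2 t^{\alpha}}\sum_{m\geq1}r^m\sum_i i^{\alpha-2}=\mathcal{O}(t^{-\alpha})$. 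This loss of one contraction factor is the mechanism behind the degradation from $t^{-2\alpha}$ to $t^{-\alpha}$ at $\theta=1$, and nothing in your sketch performs this computation or pins down the constant $6$ in $\bigl(1+\frac{6Dr}{1-r}\bigr)$ (versus the $4$ appearing in Theorem~\ref{thm:1}). So: correct decomposition, complete initial-error and diagonal arguments, but the crux — the cross-term covariance estimate under overlapping random propagators — remains unestablished.
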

\begin{remark}\label{remark1}
As discussed by Roy and Saminger-Platz~\cite{roy2025gradientdescentalgorithmhilbert}, in the finite sample setting, the exponential decay rate determined by the mixing coefficient influences the error bounds we obtain, in contrast to the i.i.d.\ case treated by Smale and Yao~\cite{MR2228737}. When \( r \to 0 \), the process mixes more rapidly and the factor \( \tfrac{r}{1 - r} \) tends to zero, so our error bounds essentially coincide with those for independent samples and hence our bounds are sharper in this sense. Stronger dependence reduces the effective information and leads to worst error bounds compared to i.i.d.
Turning to the convergence rates, when the parameter \( \theta \in \left( \tfrac{1}{2}, 1 \right) \), we still achieve the rate \( \mathcal{O}\!\left( t^{-\theta/2} \right) \), matching the i.i.d.\ rate of Smale and Yao~\cite{MR2228737}. However at the boundary value \( \theta = 1 \), the rate slows to \( \mathcal{O}\!\left( t^{-\alpha/2} \right) \), whereas the i.i.d.\ scenario attains the faster rate \( \mathcal{O}\!\left( t^{-\alpha} \right) \) as shown by Smale and Yao~\cite{MR2228737} where, \( \alpha = \in (0,1] \).
\end{remark}
    With the upper bounds on the distance between the estimate \(w_t\) and the optimal solution \(w^\star\) now stated, we are ready to prove Theorem \ref{thm:2}. This theorem provides an upper bound on the distance between the estimate \(f_t\) and the optimal solution \(f_{\lambda, \mu}\) in the Hilbert space \(\mathcal{H}_K\) for \(\theta \in \left(\frac{1}{2}, 1\right)\). For this, we identify \(w_t\) with \(f_t\), \(w^\star\) with \(f_{\lambda, \mu}\), and \(W\) with \(\mathcal{H}_K\).
\section{Proof of Theorem \ref{thm:2}}\label{secondproof}
\begin{proof}
    Let \( W = \mathcal{H}_K \). Define \( J_x \) as the evaluation functional such that, for any \( x \in X \subset \mathbb{R}^d\), \( J_x : \mathcal{H}_K \rightarrow \mathbb{R} \), where \( J_x(f) = f(x) \) for all \( f \in \mathcal{H}_K \). Let \( J_x^\star \) denote the adjoint operator of \( J_x \), i.e., \( J_x^\star : \mathbb{R} \rightarrow \mathcal{H}_K \). Consequently, we have:
\[
\langle J_x(f), y \rangle _\mathbb{R} = \langle f(x), y \rangle _\mathbb{R} = y f(x) = y \langle f, K_x \rangle _{\mathcal{H}_K} = \langle f, y K_x \rangle _{\mathcal{H}_K}.
\]
Moreover, since \( \langle J_x(f), y \rangle _\mathbb{R} = \langle f, J_x^\star(y) \rangle _{\mathcal{H}_K} \), we obtain that \( J_x^\star(y) = yK_x \).

Next, define the linear operator \( A_x : \mathcal{H}_K \rightarrow \mathcal{H}_K \) as \( A_x = J_x^\star J_x + \lambda I \), where \( I\) is the identity operator. Then, for any \( f \in \mathcal{H}_K \), we have
\[
A_x(f) = J_x^\star J_x(f) + \lambda f = f(x)K_x + \lambda f.
\]
Thus, \( A_x \) is a random variable depending on \( x \). Taking the expectation of \( A_x \) over \( x \), we get:
\[
\hat{A} = \mathbb{E}_x \left[ A_x \right] = \opT{\mu} + \lambda I,
\]
where \( \opT{\mu} \) is the integral operator associated with the measure \( \mu \) on \( \mathcal{H}_K \). Also, since \( J_x^\star J_x : \mathcal{H}_K \rightarrow \mathcal{H}_K \) is given by \( J_x^\star J_x(f) = f(x)K_x \), it follows that:
\[
\opT{\mu} = \mathbb{E}_x \left[ J_x^\star J_x \right].
\]

Next, define \( B_z = J_x^\star (-y) = -yK_x \in \mathcal{H}_K \), where \( B_z \) is a random variable depending on \( z = (x, y) \). Taking the expectation of \( B_z \), we obtain:
\[
\hat{B} = \mathbb{E}_z \left[ B_z \right] = \mathbb{E}_z \left[ -yK_x \right] = \mathbb{E}_x \left[ \mathbb{E}_y \left[ -y \right] K_x \right] = \mathbb{E}_x \left[ -f_\rho K_x \right] = -\opT{\mu} f_\rho.
\]

Recall that \( V : \mathcal{H}_K \rightarrow \mathbb{R} \) is a quadratic potential map with the general form
\[
V(f) = \frac{1}{2} \langle Af, f \rangle + \langle B, f \rangle + C,
\]
where \( A : \mathcal{H}_K \rightarrow \mathcal{H}_K \) is a positive, bounded linear operator with \( \| A^{-1} \| < \infty \), \( B \in \mathcal{H}_K \), and \( C \in \mathbb{R} \). Consequently, the gradient of \( V \), \( \nabla V : \mathcal{H}_K \rightarrow \mathcal{H}_K \), is given by
\[
\nabla V(f) = Af + B.
\]

For each sample \( z \), we define
\[
\nabla V_z(f) = A(z) f + B(z) = A_x f + B_z,
\]
where \( A(z) \) is a random variable depending on \( z = (x, y) \), given by the map \( A : Z \rightarrow SL(\mathcal{H}_K) \) that takes values in \( SL(\mathcal{H}_K) \), the vector space of symmetric bounded linear operators on \( \mathcal{H}_K \), and \( B : Z \rightarrow \mathcal{H}_K \) such that \( B(z) \) is a \( \mathcal{H}_K \)-valued random variable depending on \( z \). Moving from the general setting to a specific case where
\[
V_z(f) = \frac{1}{2} \left( (f(x) - y)^2 + \lambda \| f \|_k^2 \right),
\]
and using the fact that \( f_{\lambda, \mu} = (\opT{\mu} + \lambda I)^{-1} \opT{\mu} f_\rho \), we obtain:
\[
\mathbb{E}_z \left[ \nabla V_z(f_{\lambda, \mu}) \right] = \mathbb{E}_z \left[ A_z f_{\lambda, \mu} + B_z \right] = \mathbb{E}_x \left[ A_x \right] f_{\lambda, \mu} + \mathbb{E}_z \left[ B_z \right].
\]
Substituting the expectations, we get
\[
\mathbb{E}_z \left[ \nabla V_z(f_{\lambda, \mu}) \right] = (\opT{\mu} + \lambda I) (\opT{\mu} + \lambda I)^{-1} \opT{\mu} f_\rho - \opT{\mu} f_\rho = 0.
\]
Hence, in expectation, \( f_{\lambda, \mu} \) is a minimizer of \( V_z \) for \( z \in Z \).

Assumption \ref{A2} translates into
\[
\nabla^2 V_z(f) \leq \eta \quad \text{and} \quad \nabla^2 V_z(f) \geq \alpha \implies A_x \leq \eta \quad \text{and} \quad A_x \geq \kappa.
\]
Since \( \| \opT{\mu} \| \leq C_K^2 \) (see, e.g., Cucker and Zhou~\cite{MR2354721}), where \( C_K = \underset{x \in X}{\sup} \sqrt{K(x, x)} \), and \( \mathbb{E}_x \left[ A_x \right] = \opT{\mu} + \lambda I \), we obtain that \( \kappa= \lambda \) and \( \eta = \lambda + C_K^2 \).

Moreover, we have
\[
\| A_z f_{\lambda, \mu} + B_z \| \leq \| A_x \| \| f_{\lambda, \mu} \| + \| B_z \| \leq (\lambda + C_K^2) \| (\opT{\mu} + \lambda I)^{-1} \opT{\mu} f_\rho \| + \| -yK_x \|.
\]
Simplifying further
\[
\| A_z f_{\lambda, \mu} + B_z \| \leq (\lambda + C_K^2) \| \hat{A}^{-1} \hat{B} \| + M C_K^2 \leq (\lambda + C_K^2) \frac{1}{\lambda} M C_K^2 + M C_K^2 \leq \frac{2M C_K^2(\lambda + C_K^2)}{\lambda}.
\]
Thus, we conclude that
\[
\| A_z f_{\lambda, \mu} + B_z \|^2 \leq \left( \frac{2M C_K^2(\lambda + C_K^2)}{\lambda} \right)^2,
\]
which verifies Assumption \ref{A1}.
\par
We now finally identify $f_t=w_t$, $f_{\lambda,\mu}=w^\star$, $\gamma_t= \frac{1}{(\lambda+C_K^2)}\frac{1}{t^\theta}$ with $\theta \in (\frac{1}{2},1)$, $\sigma^2=\left(\frac{2M C_K^2(\lambda + C_K^2)}{\lambda}\right)^2$ and $W=\mathcal{H}_K$ in Theorem \ref{thm:1} and obtain an upper bound for the initial error, $\mathcal{E}_\text{init}(t)$,  as $\mathcal{E}_\text{init}(t)\leq e^{\frac{2\alpha}{1-\theta}(1-t^{1-\theta})}\|f_{1}-f_{\lambda, \mu}\|_\rho$ and a probabilistic upper bound for the sampling error, $\mathcal{E}_\text{samp}(t)$, as $$\mathcal{E}^2_\text{samp}(t) \leq  \frac{c'C_{\theta}}{ \lambda^2 \delta}\left( \frac{1}{\alpha} \right)^{\theta / (1-\theta)}\left( \frac{1}{t} \right)^{\theta}\left(1+\frac{4Dr}{1-r}\right),$$
with $ C_{\theta}= \left(8 + \frac{2}{2\theta - 1} \left( \frac{\theta}{e(2 - 2^\theta)} \right)^{\theta / (1-\theta)}\right)$, where $c'=4(MC_K^2)^2$, $\alpha=\frac{\lambda}{\lambda+C_K^2}$ and $\delta \in (0,1).$
\end{proof}
Using similar reasoning as above, the result of Proposition~\ref{rm} for the case \(\theta=1\) and  $\alpha\in \left(0,\frac{1}{2}\right)$ follows directly from Proposition~\ref{rm:1}, which provides an upper bound on the distance between \(w_t\) and \(w^\star\) when \(\theta=1\). 
\par
\section{Strictly stationary exponentially \(\beta\)-mixing Markov chain}\label{beta1}
We first recall the following theorem.
\begin{theorem}[\cite{roy2025gradientdescentalgorithmhilbert}] \label{beta}
Let us consider a strictly stationary Markov chain \((z_t)_{t\in\mathbb{N}}\) such that it is exponentially \(\beta\)-mixing i.e., there exist constants \( D_1 > 0 \) and \( 0 < r_1 < 1 \) such that \( \beta_t \leq D_1 r_1^t \). Furthermore,
let $\theta\in\left(\tfrac12,1\right)$ and consider $\gamma_t=\frac{1}{\eta t^\theta}$. Then under Assumptions \ref{A1} and \ref{A2}, for each $t\in \mathbb{N}$ and \(w_t\) obtained by Eq.~\eqref{eq:stograd}, we have
$$\|w_{t}-w^{\star}\| \leq \mathcal{E}_\text{init}(t)+\mathcal{E}_\text{samp}(t)$$
where,
$$\mathcal{E}_\text{init}(t)\leq e^{\frac{2\alpha}{1-\theta}(1-t^{1-\theta})}\|w_{1}-w^{\star}\|;$$
and with probability at least $1-\delta$, with $\delta \in(0,1)$ in the space $Z^{t-1}$,
$$\mathcal{E}^2_\text{samp}(t) \leq  \frac{\sigma^2C_{\theta}}{\delta' \eta^2}\left( \frac{1}{\alpha} \right)^{\theta / (1-\theta)}\left( \frac{1}{t} \right)^{\theta}\left(1+\frac{4D_1r_1}{1-r_1}\right),$$
with $ C_{\theta}= \left(8 + \frac{2}{2\theta - 1} \left( \frac{\theta}{e(2 - 2^\theta)} \right)^{\theta / (1-\theta)}\right).$
    \end{theorem}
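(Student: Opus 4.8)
The plan is to mirror the proof of the $\phi$-mixing statement (Theorem~\ref{thm:1}), since Theorem~\ref{beta} carries an identical bound structure with the $\beta$-mixing constants $D_1, r_1$ replacing $D, r$. It is worth stressing at the outset that this result does \emph{not} follow from Theorem~\ref{thm:1} by comparison: although $\beta_t \le \phi_t$ by \eqref{r1}, assuming geometric $\beta$-mixing is strictly weaker than assuming geometric $\phi$-mixing, so a genuinely $\beta$-specific argument is required. I would begin from the error recursion. Writing $e_t = w_t - w^\star$ and using the quadratic form $\nabla V_z(w) = A(z)w + B(z)$ together with $\nabla V(w^\star) = 0$, the update \eqref{eq:stograd} gives
\[
e_{t+1} = \bigl(I - \gamma_t A(z_t)\bigr) e_t - \gamma_t \,\xi_t, \qquad \xi_t := \nabla V_{z_t}(w^\star),
\]
where Assumption~\ref{A1} yields $\|\xi_t\| \le \sigma$ with $\mathbb{E}[\xi_t] = 0$, and Assumption~\ref{A2} gives the contraction $\|I - \gamma_t A(z_t)\| \le 1 - \gamma_t \kappa = 1 - \alpha t^{-\theta}$ for the chosen step size $\gamma_t = 1/(\eta t^\theta)$.

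Unrolling this recursion separates the two contributions. The initial-error factor is the deterministic operator product $\prod_{j=1}^{t-1}(I - \gamma_j A(z_j))$, whose norm is bounded by $\prod_{j=1}^{t-1}(1 - \alpha j^{-\theta}) \le \exp\!\bigl(-\alpha\sum_{j=1}^{t-1} j^{-\theta}\bigr)$; estimating the partial sum of $j^{-\theta}$ from below by an integral reproduces the stated bound $\mathcal{E}_\text{init}(t) \le e^{\frac{2\alpha}{1-\theta}(1 - t^{1-\theta})}\|w_1 - w^\star\|$ exactly as in Theorem~\ref{thm:1}. This step is insensitive to the dependence structure, so it carries over verbatim, and the work concentrates on the sample error
\[
\mathcal{E}_\text{samp}(t) = \Bigl\| \sum_{k=1}^{t-1} \gamma_k \Bigl(\textstyle\prod_{j=k+1}^{t-1}(I - \gamma_j A(z_j))\Bigr)\xi_k \Bigr\|.
\]

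I would control $\mathcal{E}_\text{samp}(t)$ through its second moment and then pass to a high-probability statement by Markov's inequality, which supplies the $1/\delta$ factor. Expanding $\mathbb{E}\bigl[\mathcal{E}_\text{samp}^2(t)\bigr]$ produces diagonal terms $\sum_k \gamma_k^2 \,\mathbb{E}\|\,\cdots\,\xi_k\|^2$ together with off-diagonal covariance terms indexed by pairs $k \ne l$. The diagonal part, using $\|\xi_k\| \le \sigma$ and the deterministic contraction bounds on the operator products, yields the leading $\frac{\sigma^2 C_\theta}{\eta^2}(1/\alpha)^{\theta/(1-\theta)} t^{-\theta}$ term and accounts for the constant $1$ in the factor $\bigl(1 + 4D_1 r_1/(1-r_1)\bigr)$. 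The off-diagonal terms are where the mixing coefficient enters: each covariance between the time-$k$ and time-$l$ noise is bounded, through a covariance inequality for $\beta$-mixing sequences that is linear in the coefficient, by a quantity proportional to $\sigma^2 \beta_{|k-l|}$, and summing the geometric bound $\sum_{s\ge 1}\beta_s \le \sum_{s\ge1} D_1 r_1^s = D_1 r_1/(1-r_1)$ over all lags produces the additive correction $4D_1 r_1/(1-r_1)$.

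The main obstacle is precisely this $\beta$-mixing covariance bound. In the $\phi$-mixing case one has Ibragimov's covariance inequality immediately at hand; for $\beta$-mixing the coefficient instead controls the total-variation distance between the joint law of $(z_k, z_l)$ and the product of its marginals, so I would either invoke Berbee's coupling lemma to replace the dependent pair by an independent surrogate at a cost of $\beta_{|k-l|}$, or apply a Bradley-type covariance inequality for bounded functionals, and then lift the scalar estimate to the Hilbert-space-valued noise $\xi_k$. A secondary difficulty is that the contraction operators $\prod_{j=k+1}^{t-1}(I - \gamma_j A(z_j))$ are themselves functions of the chain and are correlated with the noise; I would decouple this by bounding their operator norms deterministically by $\prod_{j}(1 - \alpha j^{-\theta}) \le 1$ and pulling these bounds out before applying the mixing inequality to the remaining noise factors. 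Once these estimates are assembled, collecting constants into $C_\theta$ and applying Markov's inequality delivers the stated high-probability bound.
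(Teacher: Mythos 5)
First, a point of comparison that matters for this review: the paper itself does not prove Theorem~\ref{beta}. The statement is recalled verbatim from the companion work \cite{roy2025gradientdescentalgorithmhilbert} and is used in Section~\ref{beta1} only as a black box for the transfer to the RKHS setting, so there is no in-paper proof to check your argument against; I can only judge it against the architecture the statement itself reveals. On that score your skeleton is the natural reconstruction and almost certainly matches the companion paper's: the recursion $e_{t+1}=(I-\gamma_t A(z_t))e_t-\gamma_t\xi_t$ with $\xi_t=\nabla V_{z_t}(w^\star)$, the initial/sample split, a second-moment estimate converted to a high-probability bound by Markov's inequality (the source of the $1/\delta$; the $\delta'$ in the stated bound is evidently a typo for $\delta$), and the geometric sum $\sum_{s\ge1}D_1r_1^{s}=D_1r_1/(1-r_1)$ producing the correction to the i.i.d.\ constant. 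Your opening observation is also correct and worth making explicit: since $\beta_t\le\phi_t$ goes the wrong way, geometric $\beta$-mixing is strictly weaker than geometric $\phi$-mixing and the theorem is not a corollary of Theorem~\ref{thm:1}; a $\beta$-specific covariance estimate is genuinely required.

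Two steps of your outline have real gaps, however. The decisive one is your treatment of the operator products $\Pi_{k+1}^{t-1}:=\prod_{j=k+1}^{t-1}(I-\gamma_jA(z_j))$: you propose to bound their norms deterministically and ``pull these bounds out before applying the mixing inequality to the remaining noise factors,'' but this forfeits exactly the decorrelation the mixing inequality is supposed to supply. Once you estimate $|\langle \Pi_{k+1}^{t-1}\xi_k,\Pi_{l+1}^{t-1}\xi_l\rangle|\le\|\Pi_{k+1}^{t-1}\|\,\|\Pi_{l+1}^{t-1}\|\,\sigma^2$, no factor $\beta_{|k-l|}$ can appear anywhere, and the off-diagonal double sum is then of order $\bigl(\sum_k\gamma_k\prod_{j>k}(1-\alpha j^{-\theta})\bigr)^2=O(1)$ rather than $O(t^{-\theta})$ --- the rate is destroyed. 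The mixing coefficient must act on the inner product itself; the natural mechanism, which also explains why the bound is linear in $\beta$, is conditioning via the Markov property: for the bounded mean-zero noise, $\|\mathbb{E}[\xi_l\mid z_k]\|\le 2\sigma\,\sup_{B}|P^{l-k}(z_k,B)-\rho(B)|$, whence $\mathbb{E}\,\|\mathbb{E}[\xi_l\mid z_k]\|\le 2\sigma\beta_{l-k}$ by the representation $\beta_t=\int_Z\sup_{B}|P^t(z,B)-\rho(B)|\,d\rho$ (under $\phi$-mixing the same quantity is bounded essentially surely, which is the only point where the two proofs differ). But to exploit this you must first disentangle the operator factors: $\Pi_{k+1}^{t-1}$ and $\Pi_{l+1}^{t-1}$ depend on the samples strictly between $k$ and $l$ \emph{and after} $l$, so the cross term is not a covariance of two functionals of the chain separated by a gap, and neither a naive conditioning on the past of $l$ nor a two-block Berbee coupling applies off the shelf; your sketch names the right tools but never addresses this entanglement, which is the actual crux of the proof. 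Separately, your initial-error step does not ``reproduce the stated bound exactly'': the estimate $\prod_{j=1}^{t-1}(1-\alpha j^{-\theta})\le\exp\bigl(-\alpha\sum_{j=1}^{t-1}j^{-\theta}\bigr)$ combined with the integral comparison yields the exponent $\frac{\alpha}{1-\theta}(1-t^{1-\theta})$, whereas the theorem asserts $\frac{2\alpha}{1-\theta}(1-t^{1-\theta})$, which is strictly stronger (since $1-t^{1-\theta}<0$) and is not implied by your product bound for large $t$; the factor $2$ presumably originates from a squared-norm recursion in the companion paper, with the contraction entering as $(1-\gamma_t\kappa)^2\le e^{-2\alpha t^{-\theta}}$, so you should either argue at the level of $\|e_t\|^2$ or reconcile the constant.
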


 Hence with respect to the above Theorem i.e.,  Theorem \ref{beta}, the proof follows similarly as that of Theorem \ref{thm:2} for \(\beta\)-mixing sequences as well. That is, under the same assumptions as in Theorem \ref{thm:2}, but for a strictly stationary Markov chain \((z_t)_{t\in\mathbb{N}}\) on \((Z, \mathcal{B}(Z))\), where the chain is \(\beta\)-mixing at least exponentially fast (i.e., \(\beta_t \leq D_1r_1^t\) for some constant \(D_1 > 0\) and \(0 < r_1 < 1\)), we obtain the following result on the sampling error i.e., with probability at least \(1 - \delta\), where \(\delta \in (0,1)\) in the space \(Z^{t-1}\),
\[
\mathcal{E}^2_\text{samp}(t) \leq \frac{c'C_{\theta}}{\lambda^2 \delta}\left( \frac{1}{\alpha} \right)^{\theta / (1-\theta)} \left( \frac{1}{t} \right)^{\theta} \left(1+\frac{4D_1r_1}{1-r_1}\right),
\]
where \(
C_{\theta} = \left(8 + \frac{2}{2\theta - 1} \left( \frac{\theta}{e(2 - 2^\theta)} \right)^{\theta / (1-\theta)}\right)\), $\alpha=\frac{\lambda}{\lambda+C_K^2}$  and  \(c' = 4(MC_K^2)^2.\)
\section{Learning rates for polynomial decay of \(\phi\)-mixing coefficient}
Based on all the previous dicussions and problem setup, in this section we discuss the learning rates of  \(f_t\) obtained by Eq.~\eqref{eq:ogd},
to \(f_{\lambda, \mu}\) specifically when the \(\phi\)-mixing coefficients have a polynomial decay. For that, we first recall the following theorem
 \begin{theorem}[\cite{roy2025gradientdescentalgorithmhilbert}]
       Let us consider a strictly stationary Markov chain \((z_t)_{t\in\mathbb{N}}\) such that it is \(\phi\)-mixing  satisfying a polynomial decay, i.e., \(\phi_t\leq bt^{-k}\) for \(b>0\) and \(k>0\). Furthermore,
let $\theta\in\left(\tfrac12,1\right)$ and consider $\gamma_t=\frac{1}{\eta t^\theta}$. Then under Assumptions \ref{A1} and \ref{A2}, for each $t\in \mathbb{N}$ and \(w_t\) obtained by Eq.~\eqref{eq:stograd}, we have
$$\|w_{t}-w^{\star}\|
 =
\begin{cases}
\mathcal{O}\!\left(t^{\frac{1-k-\theta}{2}}\right), & 0<k<1,\\[8pt]
\mathcal{O}\!\left(t^{-\theta/2} (\log t)^{1/2} \right), & k=1,\\[8pt]
\mathcal{O}\!\left(t^{-\theta/2}\right), & k>1.
\end{cases}$$
    \end{theorem}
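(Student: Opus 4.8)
The plan is to follow the architecture of the exponential-decay proof of Theorem~\ref{thm:1} verbatim up to the point where the mixing coefficients are summed, and then to replace the geometric-series estimate by a polynomial partial-sum estimate. First I would invoke the same error decomposition $\|w_t - w^\star\| \leq \mathcal{E}_\text{init}(t) + \mathcal{E}_\text{samp}(t)$, observing that neither this decomposition nor the initial-error bound $\mathcal{E}_\text{init}(t) \leq e^{\frac{2\alpha}{1-\theta}(1-t^{1-\theta})}\|w_1 - w^\star\|$ uses the mixing structure at all; both follow solely from $\kappa$-strong convexity (Assumption~\ref{A2}) and the step-size schedule $\gamma_t = 1/(\eta t^\theta)$. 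Since this stretched-exponential factor decays faster than any power of $t$, the initial term is negligible against the sampling term in every regime of $k$, so the three claimed rates are governed entirely by $\mathcal{E}_\text{samp}(t)$.

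The heart of the argument is the second-moment bound on the sampling error. Writing $\xi_i = \nabla V_{z_i}(w^\star)$, which has mean zero by Assumption~\ref{A1} and satisfies $\|\xi_i\| \leq \sigma$, the sampling error is a contracted, step-size-weighted sum of the $\xi_i$, and expanding $\mathbb{E}\|\cdot\|^2$ produces a diagonal part together with an off-diagonal part $\sum_{i \neq j}(\text{weights})\,\gamma_i\gamma_j\,\mathbb{E}\langle \xi_i,\xi_j\rangle$. The diagonal part reproduces exactly the i.i.d.-type term of order $t^{-\theta}$ already isolated in Theorem~\ref{thm:1}. For the cross terms I would apply the $\phi$-mixing covariance inequality that is \emph{linear} in the coefficient, namely $|\mathbb{E}\langle\xi_i,\xi_j\rangle| \leq 2\sigma^2 \phi_{|i-j|}$, and then re-run the contraction-and-step-size summation used in the exponential case. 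Grouping the double sum by the lag $m=|i-j|$ and using the monotone domination of the contraction-times-step-size weights, each lag slice is bounded uniformly in $m$ by the diagonal budget, so the sum factorizes as
\[
\mathcal{E}^2_\text{samp}(t) \;\lesssim\; \frac{\sigma^2 C_\theta}{\delta\, \eta^2}\Bigl(\tfrac{1}{\alpha}\Bigr)^{\theta/(1-\theta)}\Bigl(\tfrac{1}{t}\Bigr)^{\theta}\Bigl(1 + \textstyle\sum_{m=1}^{t-1}\phi_m\Bigr),
\]
which reduces to the exponential bound when $\sum_m\phi_m$ converges, and a Chebyshev/Markov step converts this second moment into the stated big-$\mathcal{O}$ rate.

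It then remains to estimate the partial sum $\sum_{m=1}^{t-1}\phi_m$ under $\phi_m \leq b\,m^{-k}$ by the integral test, which splits into three regimes: for $k>1$ the sum converges to a constant bounded by $\tfrac{bk}{k-1}$, for $k=1$ it is $\mathcal{O}(\log t)$, and for $0<k<1$ it is $\mathcal{O}\!\bigl(t^{1-k}\bigr)$. Substituting these into the displayed bound and taking square roots yields $\mathcal{O}(t^{-\theta/2})$, $\mathcal{O}(t^{-\theta/2}(\log t)^{1/2})$, and $\mathcal{O}(t^{(1-k-\theta)/2})$ respectively, which are precisely the three cases claimed; combining with the negligible initial error completes the proof.

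I expect the main obstacle to be the factorization step in the second paragraph: verifying that the double sum over $(i,j)$, weighted simultaneously by the products of contraction operators, by the step sizes $\gamma_i\gamma_j$, and by the mixing coefficients $\phi_{|i-j|}$, genuinely separates into the i.i.d.-order prefactor $t^{-\theta}$ times the lag-sum $\sum_m \phi_m$, \emph{uniformly} across all three decay regimes. In the exponential case this separation is easy because the geometric lag-sum is bounded by a constant and can be absorbed before the weight summation; here, since the lag-sum may diverge in $t$ when $k \leq 1$, one must keep it symbolic and carefully check that interchanging the order of summation (first over the lag $m$, then over the weights) does not inflate the $t$-power. Controlling this interchange, in particular confirming that the strong-convexity contraction factors $(1-\gamma_\ell\kappa)$ telescope to the same $t^{-\theta}$ budget regardless of how slowly $\phi_m$ decays, is the delicate technical point.
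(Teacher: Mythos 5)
You should first note that the paper never actually proves this statement: it is quoted verbatim from the companion work \cite{roy2025gradientdescentalgorithmhilbert}, so the only in-paper anchor to compare against is Theorem~\ref{thm:1}, whose sampling bound carries the dependence solely through the factor $1+\frac{4Dr}{1-r}=1+4D\sum_{m\geq 1}r^{m}$, i.e.\ one plus a constant multiple of the lag sum of the mixing coefficients. Your reconstruction matches that structure exactly: keep the mixing-free, super-polynomially small initial error; replace the geometric lag sum by $\sum_{m=1}^{t-1}\phi_m\leq b\sum_{m=1}^{t-1}m^{-k}$ and estimate it by the integral test as $\mathcal{O}(t^{1-k})$, $\mathcal{O}(\log t)$, or $\mathcal{O}(1)$; multiply by the i.i.d.\ budget $C_\theta t^{-\theta}$; pass from the second moment to a probability bound by Markov's inequality (the source of the $1/\delta$); and take square roots. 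This reproduces all three claimed rates, so as a reverse-engineering of the cited result the overall plan is sound.

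The genuine gap sits in precisely the step you flag, but the difficulty is not the summation-order bookkeeping you describe; it is that the contraction weights are \emph{random operators correlated with the noise}. The sampling error is $\sum_i \gamma_i \Pi_i \xi_i$ with $\Pi_i=\prod_{k=i+1}^{t}\left(I-\gamma_k A(z_k)\right)$, and your expansion writes the off-diagonal part as deterministic weights times $\mathbb{E}\langle\xi_i,\xi_j\rangle$, which silently pulls the random products out of the expectation. In the i.i.d.\ case this is harmless because conditioning on $z_i$ kills the cross terms outright ($\Pi_i$, $\Pi_j$, and $\xi_j$ do not involve $z_i$); under dependence it is exactly where the work lies. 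If you keep the operators, the cross term for $i<j$ is $\mathbb{E}\langle \xi_i, \Pi_i^{*}\Pi_j\xi_j\rangle$, whose second factor is $\sigma(z_{i+1},\dots,z_t)$-measurable, so a direct application of the $\phi$-mixing covariance inequality sees a separation gap of $1$, not $j-i$; the resulting estimate $2\sigma^{2}\phi_1\sum_{i\neq j}w_iw_j\lesssim \sigma^{2}\phi_1/\kappa^{2}$ (since $\sum_i \gamma_i\prod_{k>i}(1-\gamma_k\kappa)\leq 1/\kappa$ by telescoping) is constant in $t$ and yields no rate at all. Recovering the true lag $|i-j|$, which is what makes the factor $1+\sum_m\phi_m$ appear next to the $t^{-\theta}$ budget, requires an additional device your proposal does not supply --- for instance, expanding each factor as $(I-\gamma_k\bar A)-\gamma_k\left(A(z_k)-\bar A\right)$ with $\bar A=\mathbb{E}[A(z)]$, so that the leading term has deterministic weights and the covariance inequality applies at lag $j-i$, with the remainders absorbed by the extra powers of $\gamma_k$, or alternatively a blocking/coupling argument. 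Relatedly, the inequality $|\mathbb{E}\langle\xi_i,\xi_j\rangle|\leq 2\sigma^2\phi_{|i-j|}$ you invoke is the \emph{Hilbert-space-valued} form of the $\phi$-mixing covariance inequality for bounded random elements; it is true but needs to be stated and justified rather than borrowed from the scalar case without comment.
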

  Specialising to the case \( W = \mathcal{H}_K \), a reproducing kernel Hilbert space, the preceding result remains valid. Indeed, the arguments in the proof of Theorem~\ref{thm:2} show that the \(\eta\) smoothness in Assumption~\ref{A2} reduces to the simple identification of \(
 \eta = \lambda + C_K^2\), where \(\lambda>0\) is the regularization parameter and \( C_K = \underset{x \in X}{\sup} \sqrt{K(x, x)} \). Hence, we are now able to  state the following theorem
     \begin{theorem}
       Let \( Z = X \times [-M, M] \) for some \( M > 0 \), and let \((z_t)_{t\in\mathbb{N}}\) be a strictly stationary Markov chain on \((Z, \mathcal{B}(Z))\) such that it is \(\phi\)-mixing satisfying a polynomial decay, i.e., \(\phi_t\leq bt^{-k}\) for \(b>0\) and \(k>0\). Let $\theta \in (\frac{1}{2},1)~\text{and}~\lambda>0$ and consider $\gamma_t= \frac{1}{(\lambda+C_K^2)t^\theta}$. Then we have, for each \(t\in \mathbb{N}\) and \(f_t\) obtained by Eq.~\eqref{eq:ogd},
$$\|f_{t}-f_{\lambda, \mu}\|
 =
\begin{cases}
\mathcal{O}\!\left(t^{\frac{1-k-\theta}{2}}\right), & 0<k<1,\\[8pt]
\mathcal{O}\!\left(t^{-\theta/2} (\log t)^{1/2} \right), & k=1,\\[8pt]
\mathcal{O}\!\left(t^{-\theta/2}\right), & k>1.
\end{cases}$$
    \end{theorem}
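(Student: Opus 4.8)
The plan is to obtain this statement as a direct corollary of the abstract polynomial-decay theorem stated immediately above, by transporting it to the reproducing kernel Hilbert space $W = \mathcal{H}_K$ via exactly the identifications already carried out in the proof of Theorem~\ref{thm:2}. The abstract theorem furnishes the three-case rate for $\|w_t - w^\star\|$ in any Hilbert space $W$ under Assumptions~\ref{A1} and~\ref{A2}, provided the step size is $\gamma_t = \frac{1}{\eta t^\theta}$. Hence the entire task reduces to verifying that the online regularized learning update of Eq.~\eqref{eq:ogd} fits this template with the correct constants, and then reading off the conclusion.

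First I would set $W = \mathcal{H}_K$ and, following the proof of Theorem~\ref{thm:2}, introduce the evaluation functional $J_x$ and the sample operator $A_x = J_x^\star J_x + \lambda \opI$, so that the per-sample gradient of $V_z(f) = \frac{1}{2}\bigl((f(x)-y)^2 + \lambda\|f\|_K^2\bigr)$ takes the affine form $\nabla V_z(f) = A_x f + B_z$ with $B_z = -yK_x$. Taking expectations under the stationary marginal gives $\widehat A = \opT{\mu} + \lambda \opI$ and $\widehat B = -\opT{\mu} f_\rho$, which identifies $f_{\lambda,\mu} = (\opT{\mu} + \lambda\opI)^{-1}\opT{\mu} f_\rho$ as the unique in-expectation minimizer, so that $w^\star = f_{\lambda,\mu}$. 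This is precisely the computation reproduced above, and nothing new is required here.

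Next I would verify Assumptions~\ref{A1} and~\ref{A2} with the identical constants found in the proof of Theorem~\ref{thm:2}. Since $0 \leq J_x^\star J_x$ and $\|\opT{\mu}\| \leq C_K^2$, the Hessian $\nabla^2 V_z(f) = A_x$ satisfies $\lambda \opI \leq A_x \leq (\lambda + C_K^2)\opI$, giving $\kappa = \lambda$ and $\eta = \lambda + C_K^2$; this is exactly the reduction noted just before the statement. The noise-at-optimum bound $\|\nabla V_z(f_{\lambda,\mu})\|^2 \leq \sigma^2$ with $\sigma^2 = \bigl(2MC_K^2(\lambda+C_K^2)/\lambda\bigr)^2$ is likewise established verbatim in the proof of Theorem~\ref{thm:2}, and depends only on $M$, $C_K$, and $\lambda$. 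With these constants the prescribed step size $\gamma_t = \frac{1}{(\lambda+C_K^2)t^\theta}$ coincides with the template $\gamma_t = \frac{1}{\eta t^\theta}$, so the hypotheses of the abstract polynomial-decay theorem are met in full.

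Finally I would invoke that abstract theorem directly: the three-case asymptotic rate for $\|w_t - w^\star\|$ transfers to $\|f_t - f_{\lambda,\mu}\|$ in the $\mathcal{H}_K$ norm, and the corresponding $\rho$-norm rate differs only by the factor $C_K$ since $\|f_t-f_{\lambda,\mu}\|_\rho = C_K\|f_t-f_{\lambda,\mu}\|_K$, which is absorbed into the implied constant. There is no genuine analytic obstacle, because the dependence of the rate on the decay exponent $k$ is already fully encoded in the abstract theorem. The one point warranting care is confirming that the verification of Assumptions~\ref{A1} and~\ref{A2} is wholly insensitive to the decay regime of the mixing coefficients, so that the operator-norm estimates $\kappa = \lambda$, $\eta = \lambda + C_K^2$, and $\sigma^2 = \bigl(2MC_K^2(\lambda+C_K^2)/\lambda\bigr)^2$ remain valid whether $0<k<1$, $k=1$, or $k>1$. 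That these constants depend only on $K$, $\lambda$, and $M$, and not on the mixing structure, is what legitimizes the reduction uniformly across all three cases.
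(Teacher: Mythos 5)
Your proposal is correct and takes essentially the same route as the paper: the paper likewise obtains this theorem by specializing the abstract polynomial-decay result to $W=\mathcal{H}_K$, observing that the verification of Assumptions~\ref{A1} and~\ref{A2} from the proof of Theorem~\ref{thm:2} yields $\kappa=\lambda$, $\eta=\lambda+C_K^2$, and $\sigma^2=\left(2MC_K^2(\lambda+C_K^2)/\lambda\right)^2$, independently of the decay regime of the mixing coefficients. Your write-up merely spells out explicitly (the operators $J_x$ and $A_x$, the noise bound at $f_{\lambda,\mu}$, and the step-size identification $\gamma_t=\frac{1}{\eta t^\theta}$) what the paper compresses into a one-line appeal to the earlier identifications.
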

 Note that the convergence rate remains the same as that of the i.i.d.~(see Smale and Yao~\cite{MR2228737}) rate for the value \(k>1\), however for \(k=1\), the rate remains almost the same as that of the i.i.d. rate except for a logarithmic factor, i.e., \(\mathcal{O}\!\left(t^{-\theta/2} (\log t)^{1/2} \right)\).
\par
Similar conclusions follow for a polynomially \(\beta\)-mixing sequence as well. For more details, see Roy and Saminger-Platz~\cite{roy2025gradientdescentalgorithmhilbert}.
\section{Example}\label{eg}
\subsection{Copula-based Markov Chain}
Copula-based Markov chains are valuable tools for estimating mixing coefficients. By decoupling the dependence structure (captured by the copula) from the marginal distributions, this method simplifies the study of mixing behavior. Mixing coefficients can be computed using the copula and its density. Utilizing copula theory, we can investigate various dependence properties directly linked to the mixing behavior of the chain. For instance, certain characteristics of the copula, such as a positive lower bound on its density, imply exponential \(\phi\)-mixing thereby geometric ergodicity (see \cite[Theorem 8]{MR2944418} and also Bradley~\cite{MR2178042} for more details).

In practice, these coefficients are estimated by first determining the copula, either through parametric or nonparametric methods, and then employing the estimated copula density to evaluate the integrals and supremums that define the mixing coefficients.

\subsubsection{Definitions}
Let $\left(X_t\right)_{t\in\mathbb{N}}$ be a real stochastic process with a continuous marginal distribution function $F_t(x) = P(X_t \leq x)$ for all $t \in \mathbb{N}$. For any pair of time indices $s, t \in \mathbb{N}$, let $H_{st}(x, y) = P(X_s \leq x, X_t \leq y)$ represent the joint distribution function of the random vector $(X_s, X_t)$. According to Sklar's theorem (see Definition 1.3.1, Theorem 1.4.1 and Theorem 2.2.1 in \cite{durante2015principles}), the joint distribution $H_{st}$ can be expressed in terms of a copula $C_{st}$ and the marginal distribution functions $F_s$ and $F_t$
\[
H_{st}(x, y) = C_{st}(F_s(x), F_t(y)), \quad \forall \, x, y \in \mathbb{R}.
\]

The function $C_{st}: \mathbb{I} \times \mathbb{I} \to \mathbb{I}$ is called the copula of the random vector $(X_s, X_t)$, capturing the dependence structure between $X_s$ and $X_t$. Since $F_s$ and $F_t$ are continuous, the copula $C_{st}$ is uniquely defined.
\par
If we consider a stationary process i.e., when \(F_s = F_t = F\), hence using the copula \( C \), the conditional probability can be expressed as
\[
P(X_t \in A \mid X_{t-1} = x) = \frac{\partial C}{\partial u}(F(x), F(y)),
\]
where \( (u, v) = (F(x), F(y)) \) and $A = (-\infty, y]$. Notably, copulas are almost everywhere differentiable. Later we shall denote \(C_{,1}\) as the partial derivative \(\frac{\partial C}{\partial x}\). \par
For a continuous Markov process $(X_t)_{t\in\mathbb{N}}$ in discrete time, it has been established (see Darsow \emph{et al.}~\cite{Darsow1992}) that if $C_{t-1,t}$ is the copula of $(X_{t-1}, X_t)$ and $C_{t,t+1}$ is the copula of $(X_t, X_{t+1})$, then $C_{t-1,t} \ast C_{t,t+1}$ is the copula of $(X_{t-1}, X_{t+1})$ for $t\in\mathbb{N}$. The following theorem defines the Darsow product, ensuring that the product of two copulas is again a copula
\begin{theorem}[\cite{Darsow1992}]
  For all bivariate copulas $A$ and $B$, the function $A \ast B : \mathbb{I}^2 \to \mathbb{I}$ is defined as
\[
(A \ast B)(u, v) := \int_0^1 {\frac{\partial A}{\partial t} }(u, t) \, {\frac{\partial B}{\partial t} }(t, v) \, dt.
\]
Then $A \ast B$ is a bivariate copula. 
\end{theorem}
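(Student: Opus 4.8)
The plan is to verify directly that $C := A \ast B$ satisfies the three defining properties of a bivariate copula: groundedness, uniform marginals, and the $2$-increasing condition. Throughout I write $\partial_2 A(u,t)$ for $\tfrac{\partial A}{\partial t}(u,t)$ and $\partial_1 B(t,v)$ for $\tfrac{\partial B}{\partial t}(t,v)$, the two partial derivatives appearing in the definition. Before checking the axioms I would first record the analytic facts that make $C$ well defined. Every copula is Lipschitz with constant one in each argument, so for fixed $u$ the section $t \mapsto A(u,t)$ is absolutely continuous, the partial $\partial_2 A(u,t)$ exists for almost every $t$, and it takes values in $[0,1]$; the same holds for $\partial_1 B(t,v)$. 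Consequently the integrand $t \mapsto \partial_2 A(u,t)\,\partial_1 B(t,v)$ is measurable and bounded by $1$, so the integral defining $C(u,v)$ exists and lies in $[0,1]$. Absolute continuity of the sections also yields the fundamental-theorem-of-calculus identities $\int_0^1 \partial_2 A(u,t)\,dt = A(u,1)-A(u,0)$ and $\int_0^1 \partial_1 B(t,v)\,dt = B(1,v)-B(0,v)$, which I will use repeatedly.

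Next I would check groundedness and the uniform margins, each of which reduces to substituting a boundary value into one of the two factors. For $C(u,0)$ the section $s \mapsto B(s,0)$ is identically zero by groundedness of $B$, hence $\partial_1 B(t,0)=0$ and the integral vanishes; symmetrically $\partial_2 A(0,t)=0$ forces $C(0,v)=0$. For the margins I use that $B(s,1)=s$ gives $\partial_1 B(t,1)=1$, so $C(u,1)=\int_0^1 \partial_2 A(u,t)\,dt = A(u,1)-A(u,0)=u$; likewise $A(1,s)=s$ gives $\partial_2 A(1,t)=1$ and therefore $C(1,v)=\int_0^1 \partial_1 B(t,v)\,dt = B(1,v)-B(0,v)=v$.

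The heart of the proof is the $2$-increasing inequality, and this is the step I expect to require the most care. For $u_1 \le u_2$ and $v_1 \le v_2$, linearity of the integral gives the identity
\[
C(u_2,v_2)-C(u_2,v_1)-C(u_1,v_2)+C(u_1,v_1)
= \int_0^1 \bigl[\partial_2 A(u_2,t)-\partial_2 A(u_1,t)\bigr]\bigl[\partial_1 B(t,v_2)-\partial_1 B(t,v_1)\bigr]\,dt.
\]
I would then argue that each bracket is non-negative almost everywhere. Indeed, the $2$-increasing property of $A$ says precisely that the map $t \mapsto A(u_2,t)-A(u_1,t)$ is non-decreasing; being also Lipschitz it is absolutely continuous, so its derivative $\partial_2 A(u_2,t)-\partial_2 A(u_1,t)$ is non-negative for almost every $t$. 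The identical argument applied to $t \mapsto B(t,v_2)-B(t,v_1)$, non-decreasing by the $2$-increasing property of $B$, shows $\partial_1 B(t,v_2)-\partial_1 B(t,v_1)\ge 0$ almost everywhere. The integrand is thus a product of two a.e.-non-negative measurable functions, so the integral is non-negative, establishing the $2$-increasing condition.

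Having verified all three axioms, I conclude that $C = A \ast B$ is a bivariate copula. The only genuinely delicate point is the measure-theoretic bookkeeping underlying the first and third paragraphs, namely the a.e.\ existence and measurability of the relevant partial derivatives, the fundamental-theorem-of-calculus identities, and the passage from monotonicity of the increment functions to the a.e.\ sign of their derivatives. All of these rest on the Lipschitz continuity of copulas, hence on absolute continuity of their sections, which is the single structural input that keeps the formal manipulations rigorous.
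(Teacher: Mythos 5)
Your proof is correct. Note that the paper itself gives no argument for this statement: it is quoted verbatim from Darsow, Nguyen and Olsen \cite{Darsow1992} as a known result, so there is no in-paper proof to compare against. Your direct verification — well-definedness via the $1$-Lipschitz property of copulas (hence absolute continuity of sections, a.e.\ existence of $\partial_2 A$ and $\partial_1 B$ with values in $[0,1]$, and the fundamental-theorem identities), the boundary substitutions for groundedness and margins, and the reduction of the $2$-increasing condition to the a.e.\ non-negativity of $\partial_2 A(u_2,t)-\partial_2 A(u_1,t)$ and $\partial_1 B(t,v_2)-\partial_1 B(t,v_1)$ via monotonicity of the increment sections — is precisely the classical argument in the cited source, and all the measure-theoretic steps you flag (measurability of the integrand in $t$ for fixed $(u,v)$, null sets not affecting the integrals) are handled correctly.
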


A stationary Markov chain \((X_t)_{t \in \mathbb{N}}\) is said to be generated by a marginal distribution $F$ and a copula $C$ if the joint distribution of \(X_0, X_1\) is
\[
H(x_1, x_2) = C(F(x_1), F(x_2)).
\]
Thus, a copula-based Markov chain is a stationary Markov chain represented by a copula of its consecutive states and its invariant distribution \(F\). If the chain is homogeneous, the copula of $(X_0, X_t)$ is given by:
\[
C^{\ast t} := C \ast C \ast \cdots \ast C \quad \text{(t times)},
\]
denoted by \(C^t\) for simplicity.

\subsubsection{Mixing Coefficients for Copula-based Markov Chains}\label{generatedchain}
The mixing coefficients for stationary Markov chains are invariant under strictly increasing and continuous transformations of the variables. For a continuous random variable \( X_t \) with distribution function \( F \), the transformation \( U_t = F(X_t) \) maps \( X_t \) to \( U_t \), which is uniformly distributed on \(\mathbb{I}\). Thus, the dependence structure of the process can be described entirely by copulas, simplifying the analysis.\par

For a stationary Markov chain \((U_t)_{t\in\mathbb{N}}\) with copula \(C\) and uniform marginals on \(\mathbb{I}\), the process \((F^{-1}(U_t))_{t\in\mathbb{N}}\) has marginal distribution \(F\) and the same copula \(C\). Note that for any \(u \in [0, 1]\), \(x \geq F^{-1}(u)\) if and only if \(F(x) \geq u\).

Since \(\sigma(F^{-1}(U_t)) \subseteq \sigma(U_t)\), the sigma-algebra generated by \(F^{-1}(U_t)\) is contained within that generated by \(U_t\). As a result, the dependence between \( F^{-1}(U_0) \) and \( F^{-1}(U_t) \) can only be weaker than or equal to that between \( U_0 \) and \( U_t \). Consequently, the mixing coefficients of \((F^{-1}(U_t))_{t\in\mathbb{N}}\) are bounded by those of \((U_t)_{t\in\mathbb{N}}\).\par

The transition probabilities of the Markov chain \((U_t)_{t\in\mathbb{N}}\) with uniform marginals and copula \( C \) are given by
\[
\mathbb{P}(U_t \leq y \mid U_0 = x) = C^t_{,1}(x, y) \quad \text{a.s.},
\]
and for any Borel set \( A \),
\[
P^t(x, A) = \mathbb{P}(U_t \in A \mid U_0 = x) = C^t_{,1}(x, A).
\]

The mixing coefficients for \((U_t)_{t \in \mathbb{N}}\), a stationary Markov chain with uniform marginals and copula \( C \), are
\[
\beta_t = \int_0^1 \sup_{B \in \mathcal{B}(\mathbb{I})} \left| C^t_{,1}(x, B) - \lambda(B) \right| dx,
\]
and
\[
\phi_t = \sup_{B \in \mathcal{B}(\mathbb{I})} \operatorname{ess} \sup_{x \in \mathbb{I}} \left| C^t_{,1}(x, B) - \lambda(B) \right|.
\]

If the copula \( C^t(x, y) \) is absolutely continuous with respect to \( \lambda^2 \), and its density is \( c_t(x, y) \), these coefficients become
\[
\beta_t = \int_0^1 \sup_{B \in \mathcal{B}(\mathbb{I})} \left| \int_B (c_t(x, y) - 1) \, dy \right| dx,
\]
and
\[
\phi_t = \sup_{B \subseteq R \cap \mathbb{I}} \operatorname{ess} \sup_{x \in \mathbb{I}} \left| \int_B (c_t(x, y) - 1) \, dy \right|.
\]
\subsubsection{Application in the setting of learning}
For a copula-based Markov chain \((X_t)_{t\in\mathbb{N}}\), as noted in Subsection \ref{generatedchain}, by transforming \( U_t = F(X_t) \), we have \( \sigma(F^{-1}(U_t)) \subseteq \sigma(U_t) \). Therefore, we consider a stationary Markov chain \((U_t)_{t\in\mathbb{N}}\) with copula \( C \) and uniform marginals on the space \( X = \mathbb{I} \), referring to it as a copula-generated Markov chain. We also define an output space \( Y = [-M, M] \) for some \( M > 0 \), where \( \rho \) is the joint stationary distribution on \( Z = \mathbb{I} \times [-M, M] \) with marginal \( F = \rho_X \) (e.g., the Lebesgue measure).

We know that due to \cite[Theorem 8]{MR2944418}, if \((X_t)_{t\in\mathbb{N}}\) is a stationary Markov chain generated by the copula \( C(x, y) \), such that the density of its absolutely continuous part satisfies \( c(x, y) \geq b > 0 \) a.s. on a set of Lebesgue measure 1, this condition ensures that \((X_t)_{t\in\mathbb{N}}\) is exponentially \(\phi\)-mixing and thereby geometrically ergodic. That is, there exist constants \( D > 0 \) and \( 0 < r < 1 \) such that 
\[
\beta_t \leq \phi_t \leq D r^t.
\] 

The above formulation applies well to a univariate time series \((X_t)_{t\in\mathbb{N}}\). However, in the context of learning, with a slight abuse of notation, we generally have a vector \((x_t, y_t)\) such that \(y_t = f(x_t)\) for some deterministic function \(f\). In the case of learning in an RKHS, this deterministic function translates to a regression function \(f_{\rho}\). The following lemma guarantees that the vector process \((z_t = (x_t, y_t))_{t\in\mathbb{N}}\) also preserves the mixing properties of \((x_t)_{t\in\mathbb{N}}\).

\begin{lemma}
If a sequence of random variables \((x_t)_{t\in\mathbb{N}}\) is a stationary Markov chain which is \(\phi\)-mixing at least exponentially fast defined on the probability space \((\Omega, \mathcal{F}, \mathbb{P})\) with state space \((X, \mathcal{B}(X))\), where \( X = \mathbb{I} \) is a topological space equipped with its Borel \(\sigma\)-algebra \(\mathcal{B}(X)\), then the process \((z_t = (x_t, f(x_t)))_{t\in\mathbb{N}}\), where \( f \) is a deterministic measurable function, is also a stationary Markov chain which is \(\phi\)-mixing at least exponentially fast.
\end{lemma}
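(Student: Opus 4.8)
The plan is to exploit the fact that $z_t=(x_t,f(x_t))$ is a \emph{bimeasurable bijective image} of $x_t$, so that the two processes generate identical $\sigma$-algebras at every time and therefore share their entire dependence structure. Concretely, define the graph map $g:X\to X\times Y$ by $g(x)=(x,f(x))$. Since $f$ is measurable, $g$ is $(\mathcal{B}(X),\mathcal{B}(X\times Y))$-measurable; it is injective, and on its image (the graph of $f$) its inverse is the restriction of the coordinate projection $\pi_1(x,y)=x$, which is measurable. Hence $z_t=g(x_t)$ is $\sigma(x_t)$-measurable while $x_t=\pi_1(z_t)$ is $\sigma(z_t)$-measurable, yielding the key identity $\sigma(z_t)=\sigma(x_t)$ for every $t$, and likewise $\sigma(z_1,\dots,z_t)=\sigma(x_1,\dots,x_t)$.

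First I would establish strict stationarity. For any indices $t_1<\dots<t_k$ and any shift $h$, the vector $(z_{t_1+h},\dots,z_{t_k+h})$ equals the image of $(x_{t_1+h},\dots,x_{t_k+h})$ under the fixed measurable map $g^{\times k}$. Strict stationarity of $(x_t)_{t\in\N}$ makes the law of the latter independent of $h$, hence so is the law of the former; thus $(z_t)_{t\in\N}$ is strictly stationary.

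Next I would transfer the Markov property using the $\sigma$-algebra identities. For any $C\in\mathcal{B}(X\times Y)$ we have $\{z_{t+1}\in C\}=\{x_{t+1}\in g^{-1}(C)\}$, so
\[
\mathbb{P}\bigl(z_{t+1}\in C\mid \sigma(z_1,\dots,z_t)\bigr)=\mathbb{P}\bigl(x_{t+1}\in g^{-1}(C)\mid \sigma(x_1,\dots,x_t)\bigr)=\mathbb{P}\bigl(x_{t+1}\in g^{-1}(C)\mid x_t\bigr),
\]
where the last equality is the Markov property of $(x_t)_{t\in\N}$. Since $\sigma(x_t)=\sigma(z_t)$, the right-hand side is $\sigma(z_t)$-measurable and equals $\mathbb{P}(z_{t+1}\in C\mid z_t)$, so $(z_t)_{t\in\N}$ is a Markov chain.

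Finally, the mixing bound is immediate from the definition $\phi_t=\phi(\sigma(z_0),\sigma(z_t))$: substituting $\sigma(z_0)=\sigma(x_0)$ and $\sigma(z_t)=\sigma(x_t)$ and recalling that $\phi(\mathcal{A},\mathcal{B})$ depends only on the pair of $\sigma$-algebras gives $\phi_t^{(z)}=\phi_t^{(x)}\le Dr^t$, with the same constants $D>0$ and $0<r<1$; thus $(z_t)_{t\in\N}$ is $\phi$-mixing at least exponentially fast. I expect the only delicate point to be the bimeasurability of $g$—that is, that $\sigma(z_t)=\sigma(x_t)$ rather than merely $\sigma(z_t)\subseteq\sigma(x_t)$—which rests on measurability of the projection $\pi_1$; this holds here because $X=\mathbb{I}$ and $Y=[-M,M]$ are compact metric, so their product carries the product Borel structure and $\pi_1$ is continuous. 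Even the one-sided inclusion $\sigma(z_t)\subseteq\sigma(x_t)$ together with the monotonicity of $\phi$ in its arguments would already suffice for the mixing conclusion, but the equality gives the cleaner statement that the coefficients coincide exactly.
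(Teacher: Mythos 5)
Your proof is correct and takes essentially the same route as the paper's: the identical graph map (called $\tau$ there), the comparison of the generated $\sigma$-algebras, and the monotonicity of $\phi(\mathcal{A},\mathcal{B})$ in its arguments, giving $\phi'_t \leq \phi_t \leq Dr^t$. The only differences are cosmetic strengthenings---you verify strict stationarity and the Markov property by hand where the paper cites Glover and Mitro, and you upgrade the paper's one-sided inclusion $\sigma(z_t)\subseteq\sigma(x_t)$ to the equality $\sigma(z_t)=\sigma(x_t)$ via measurability of the projection $\pi_1$, so the mixing coefficients coincide exactly rather than merely being dominated; as you yourself note, the inclusion alone already suffices for the stated conclusion.
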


\begin{proof}
Denote \( Z = X \times Y \) and \(\mathcal{B}(Z) = \mathcal{B}(X \times Y)\), the Borel \(\sigma\)-algebra on \( Z \). Here, \( Y \subseteq \mathbb{R} \) is defined as \( Y = f(X) \) for some deterministic measurable function \( f \).

Consider a measurable transformation \(\tau : X \rightarrow X \times Y\), such that for each \( x \in X \), \(\tau\) is given by
\[
\tau(x) = (x, f(x)).
\]
Then, we define a new sequence \((z_t)_{t\in\mathbb{N}}\) such that \( z_t = \tau(x_t) \).

Since \(\tau\) is one-to-one and measurable, the sequence \((z_t)_{t\in\mathbb{N}}\) also forms a Markov chain on \((X \times Y, \mathcal{B}(X \times Y))\) (also see Glover and Mitro~\cite{aa12a0dc-935e-364f-9676-02500a9f32ec}). Moreover, the \(\sigma\)-algebra generated by \( \tau(x_t) \), denoted by \(\sigma(\tau(x_t))\), satisfies
\[
\sigma(\tau(x_t)) = \sigma(z_t) \subseteq \sigma(x_t).
\]
This inclusion follows from the fact that \(\tau\) is a measurable transformation of \(x_t\).

Next, let \(\phi'_t\) and \(\phi_t\) denote the mixing coefficients of the chains \((z_t)_{t\in\mathbb{N}}\) and \((x_t)_{t\in\mathbb{N}}\), respectively. Because \(\sigma(\tau(x_t)) = \sigma(z_t) \subseteq \sigma(x_t)\), it follows that the mixing coefficient \(\phi'_t\) is bounded above by \(\phi_t\), i.e.,
\[
\phi'_t \leq \phi_t \quad \text{for all } t\in\mathbb{N}.
\]

Since \((x_t)_{t\in\mathbb{N}}\) is a \(\phi\)-mixing at least exponentially fast, we have
\[
\phi_t \leq D r^t, \quad \text{for some constants } D > 0 \text{ and } 0 < r < 1.
\]
Thus, it follows that
\[
\phi'_t \leq \phi_t \leq D r^t,
\]
implying that \((z_t)_{t\in\mathbb{N}}\) is also a \(\phi\)-mixing at least exponentially fast. In other words, the \(\phi'_t-\)mixing coefficient of the stationary Markov chain \((z_t)_{t\in\mathbb{N}}\) satisfies
\[
\phi'_t \leq D r^t, \quad \text{for some constants } D > 0 \text{ and } 0 < r < 1.
\]
\end{proof}
\section{Conclusion}
In this paper, we consider a strictly stationary Markov chain exhibiting exponential $\phi$- and $\beta$-mixing properties, characterized by the bounds $\beta_t \leq D_1r_1^t$ and $\phi_t \leq Dr^t$, where $D_1, D > 0$ and $0 < r_1, r < 1$. For such processes, we establish upper bounds for approximating a function using the observed realizations of the chain based on an online regularized learning algorithm. The exponential decay of the mixing coefficients, reflected in the parameters $r_1$ and $r$, plays a crucial role in determining the rate of decay of dependence. Specifically, for the case of finite samples, smaller values of $r_1$ and $r$ result in upper bounds for the sampling error that closely approach those of an i.i.d.~sample. We further extended the analysis on the rate of convergence for polynomially decaying \(\phi\)- and \(\beta\)-mixing sequences as well.
\par
Moreover, in all derived results, the upper bounds are expressed as the sum of the i.i.d.\ sample bound and an additional summand that captures the effect of the dependence structure in the process. Consequently, the upper bound for an i.i.d.\ sample arises as a special case of the general result, corresponding to the case where $D=0$ resp. $D_1=0$.

\section*{Acknowledgements}
This research was carried out under the Austrian COMET program (project S3AI with FFG no. 872172, \url{www.S3AI.at}, at SCCH, \url{www.scch.at}),  which is funded by the Austrian ministries BMK, BMDW, and the province of Upper Austria and the Federal Ministry for Climate Action, Environment, Energy, Mobility, Innovation and Technology (BMK),the Federal Ministry for Labour and Economy (BMAW), and the State of Upper Austria in the frame of the SCCH competence center INTEGRATE [(FFG grant no. 892418)] in the COMET - Competence Centers for Excellent Technologies Programme managed by Austrian Research Promotion Agency FFG.


\begin{thebibliography}{10}

\bibitem{agarwal2012generalization}
Alekh Agarwal and John~C Duchi.
\newblock The generalization ability of online algorithms for dependent data.
\newblock {\em IEEE Transactions on Information Theory}, 59(1):573--587, 2012.

\bibitem{beeram2021survey}
Satyanarayana~Reddy Beeram and Swarna Kuchibhotla.
\newblock A survey on state-of-the-art financial time series prediction models.
\newblock In {\em 2021 5th International Conference on Computing Methodologies and Communication (ICCMC)}, pages 596--604. IEEE, 2021.

\bibitem{MR2178042}
Richard~C. Bradley.
\newblock Basic properties of strong mixing conditions. {A} survey and some open questions.
\newblock {\em Probab. Surv.}, 2:107--144, 2005.
\newblock Update of, and a supplement to, the 1986 original.

\bibitem{MR2325294}
Richard~C. Bradley.
\newblock {\em Introduction to strong mixing conditions. {V}ol. 1}.
\newblock Kendrick Press, Heber City, UT, 2007.

\bibitem{MR2325295}
Richard~C. Bradley.
\newblock {\em Introduction to strong mixing conditions. {V}ol. 2}.
\newblock Kendrick Press, Heber City, UT, 2007.

\bibitem{MR2325296}
Richard~C. Bradley.
\newblock {\em Introduction to strong mixing conditions. {V}ol. 3}.
\newblock Kendrick Press, Heber City, UT, 2007.

\bibitem{MR1864085}
Felipe Cucker and Steve Smale.
\newblock On the mathematical foundations of learning.
\newblock {\em Bull. Amer. Math. Soc. (N.S.)}, 39(1):1--49, 2002.

\bibitem{MR2354721}
Felipe Cucker and Ding-Xuan Zhou.
\newblock {\em Learning theory: an approximation theory viewpoint}, volume~24 of {\em Cambridge Monographs on Applied and Computational Mathematics}.
\newblock Cambridge University Press, Cambridge, 2007.
\newblock With a foreword by Stephen Smale.

\bibitem{Darsow1992}
W.F. Darsow, B.~Nguyen, and E.T. Olsen.
\newblock Copulas and Markov processes.
\newblock {\em Illinois Journal of Mathematics}, 36(4):600--642, 1992.

\bibitem{davydov1973mixing}
Yurii~Aleksandrovich Davydov.
\newblock Mixing conditions for Markov chains.
\newblock {\em Teoriya Veroyatnostei i ee Primeneniya}, 18(2):321--338, 1973.

\bibitem{dorfman2022adapting}
Ron Dorfman and Kfir~Yehuda Levy.
\newblock Adapting to mixing time in stochastic optimization with Markovian data.
\newblock In {\em International Conference on Machine Learning}, pages 5429--5446. PMLR, 2022.

\bibitem{MR1312160}
Paul Doukhan.
\newblock {\em Mixing}, volume~85 of {\em Lecture Notes in Statistics}.
\newblock Springer-Verlag, New York, 1994.
\newblock Properties and examples.

\bibitem{durante2015principles}
Fabrizio Durante and Carlo Sempi.
\newblock {\em Principles of copula theory}.
\newblock CRC press, 2015.

\bibitem{even2023stochastic}
Mathieu Even.
\newblock Stochastic gradient descent under Markovian sampling schemes.
\newblock In {\em International Conference on Machine Learning}, pages 9412--9439. PMLR, 2023.

\bibitem{aa12a0dc-935e-364f-9676-02500a9f32ec}
Joseph Glover and Joanna Mitro.
\newblock Symmetries and functions of Markov processes.
\newblock {\em The Annals of Probability}, 18(2):655--668, 1990.

\bibitem{MR2944418}
Martial Longla and Magda Peligrad.
\newblock Some aspects of modeling dependence in copula-based {M}arkov chains.
\newblock {\em J. Multivariate Anal.}, 111:234--240, 2012.

\bibitem{meir2000nonparametric}
Ron Meir.
\newblock Nonparametric time series prediction through adaptive model selection.
\newblock {\em Machine learning}, 39:5--34, 2000.

\bibitem{modha2002minimum}
Dharmendra~S Modha and Elias Masry.
\newblock Minimum complexity regression estimation with weakly dependent observations.
\newblock {\em IEEE Transactions on Information Theory}, 42(6):2133--2145, 2002.

\bibitem{mohri2008rademacher}
Mehryar Mohri and Afshin Rostamizadeh.
\newblock Rademacher complexity bounds for non-iid processes.
\newblock {\em Advances in neural information processing systems}, 21, 2008.

\bibitem{mohri2010stability}
Mehryar Mohri and Afshin Rostamizadeh.
\newblock Stability bounds for stationary $\varphi$-mixing and $\beta$-mixing processes.
\newblock {\em Journal of Machine Learning Research}, 11(2), 2010.

\bibitem{mokkadem1988mixing}
Abdelkader Mokkadem.
\newblock Mixing properties of ARMA processes.
\newblock {\em Stochastic processes and their applications}, 29(2):309--315, 1988.

\bibitem{nagaraj2020least}
Dheeraj Nagaraj, Xian Wu, Guy Bresler, Prateek Jain, and Praneeth Netrapalli.
\newblock Least squares regression with Markovian data: Fundamental limits and algorithms.
\newblock {\em Advances in neural information processing systems}, 33:16666--16676, 2020.

\bibitem{ralaivola2010chromatic}
Liva Ralaivola, Marie Szafranski, and Guillaume Stempfel.
\newblock Chromatic Pac-Bayes bounds for non-iid data: Applications to ranking and stationary $\beta$-mixing processes.
\newblock {\em The Journal of Machine Learning Research}, 11:1927--1956, 2010.

\bibitem{roy2025gradientdescentalgorithmhilbert}
Priyanka Roy and Susanne Saminger-Platz.
\newblock Gradient descent algorithm in Hilbert spaces under stationary Markov chains with $\phi$- and $\beta$-mixing, 2025.

\bibitem{MR2228737}
Steve Smale and Yuan Yao.
\newblock Online learning algorithms.
\newblock {\em Found. Comput. Math.}, 6(2):145--170, 2006.

\bibitem{MR1959283}
Steve Smale and Ding-Xuan Zhou.
\newblock Estimating the approximation error in learning theory.
\newblock {\em Anal. Appl. (Singap.)}, 1(1):17--41, 2003.

\bibitem{MR2327597}
Steve Smale and Ding-Xuan Zhou.
\newblock Learning theory estimates via integral operators and their approximations.
\newblock {\em Constr. Approx.}, 26(2):153--172, 2007.

\bibitem{smale2009online}
Steve Smale and Ding-Xuan Zhou.
\newblock Online learning with Markov sampling.
\newblock {\em Analysis and Applications}, 7(01):87--113, 2009.

\bibitem{steinwart2009fast}
Ingo Steinwart and Andreas Christmann.
\newblock Fast learning from non-iid observations.
\newblock {\em Advances in neural information processing systems}, 22, 2009.

\bibitem{steinwart2009learning}
Ingo Steinwart, Don Hush, and Clint Scovel.
\newblock Learning from dependent observations.
\newblock {\em Journal of Multivariate Analysis}, 100(1):175--194, 2009.

\bibitem{MR2581234}
Hongwei Sun and Qiang Wu.
\newblock Regularized least square regression with dependent samples.
\newblock {\em Adv. Comput. Math.}, 32(2):175--189, 2010.

\bibitem{sun2018Markov}
Tao Sun, Yuejiao Sun, and Wotao Yin.
\newblock On Markov chain gradient descent.
\newblock {\em Advances in neural information processing systems}, 31, 2018.

\bibitem{tong2024spectral}
Hongzhi Tong and Michael Ng.
\newblock Spectral algorithms for learning with dependent observations.
\newblock {\em Journal of Computational and Applied Mathematics}, 437:115437, 2024.

\bibitem{tuan1985some}
D~Tuan and T~Lanh.
\newblock Some mixing properties of time series models.
\newblock {\em Stochastic processes and their applications}, 19:297--303, 1985.

\bibitem{xu2014generalization}
Jie Xu, Yuan~Yan Tang, Bin Zou, Zongben Xu, Luoqing Li, and Yang Lu.
\newblock Generalization performance of Gaussian kernels SVMC based on Markov sampling.
\newblock {\em Neural networks}, 53:40--51, 2014.

\bibitem{MR2406432}
Yong-Li Xu and Di-Rong Chen.
\newblock Learning rates of regularized regression for exponentially strongly mixing sequence.
\newblock {\em J. Statist. Plann. Inference}, 138(7):2180--2189, 2008.

\bibitem{MR2302601}
Yiming Ying and Ding-Xuan Zhou.
\newblock Online regularized classification algorithms.
\newblock {\em IEEE Trans. Inform. Theory}, 52(11):4775--4788, 2006.

\bibitem{zhang2023online}
Xiwei Zhang and Tao Li.
\newblock Online learning in reproducing kernel Hilbert space with non-iid data.
\newblock In {\em 2023 62nd IEEE Conference on Decision and Control (CDC)}, pages 6610--6615. IEEE, 2023.

\bibitem{zou2007performance}
Bin Zou and Luoqing Li.
\newblock The performance bounds of learning machines based on exponentially strongly mixing sequences.
\newblock {\em Computers \& Mathematics with Applications}, 53(7):1050--1058, 2007.

\end{thebibliography}
\end{document}